\documentclass{article}

\usepackage{microtype}
\usepackage{graphicx}
\usepackage{subcaption}
\usepackage{booktabs} 
\usepackage{tcolorbox}
\tcbuselibrary{skins}

\usepackage{hyperref}

\usepackage[preprint]{icml2026}

\usepackage{amsmath}
\usepackage{amssymb}
\usepackage{mathtools}
\usepackage{amsthm}

\usepackage[capitalize,noabbrev]{cleveref}

\theoremstyle{plain}
\newtheorem{theorem}{Theorem}[section]
\newtheorem{proposition}[theorem]{Proposition}
\newtheorem{lemma}[theorem]{Lemma}

\theoremstyle{definition}

\newtheorem{assumption}[theorem]{Assumption}
\theoremstyle{remark}
\newtheorem{remark}[theorem]{Remark}

\usepackage[textsize=tiny]{todonotes}

\icmltitlerunning{Submission and Formatting Instructions for ICML 2026}

\begin{document}

\twocolumn[
  \icmltitle{The Multiple Ticket Hypothesis: \\
  Random Sparse Subnetworks Suffice for RLVR}



  \icmlsetsymbol{equal}{*}

  \begin{icmlauthorlist}
    \icmlauthor{Israel Adewuyi}{equal,sch}
    \icmlauthor{Solomon Okibe}{sch}
    \icmlauthor{Vladmir Ivanov}{sch}
  \end{icmlauthorlist}

  \icmlaffiliation{sch}{Innopolis University, Tartastan, Russia}
  \icmlcorrespondingauthor{Israel Adewuyi}{i.adewuyi@innopolis.university}

  \icmlkeywords{Machine Learning, ICML}

  \vskip 0.3in
]



\printAffiliationsAndNotice{}  

\begin{abstract}
The Lottery Ticket Hypothesis demonstrated that sparse subnetworks can match full-model performance, suggesting parameter redundancy. Meanwhile, in Reinforcement Learning with Verifiable Rewards (RLVR), recent work has shown that updates concentrate on a sparse subset of parameters, which further lends evidence to this underlying redundancy. We study the simplest possible way to exploit this redundancy: training only a randomly selected subset of parameters at extreme sparsities. Empirically, we find that training just 1\% of parameters matches or exceeds full-parameter RLVR finetuning across 3 models and 2 task domains. Moreover, different random masks show minimal overlap ($\leq 0.005$ Jaccard similarity) and yet all succeed, suggesting pretrained models contain many viable sparse subnetworks rather than one privileged set. We term this the \emph{Multiple Ticket Hypothesis}. We explain this phenomenon through the implicit per-step KL constraint in RLVR, which restricts updates to a low-dimensional subspace, enabling arbitrary sparse masks to succeed.

\end{abstract}

\section{Introduction}

Reinforcement Learning with Verifiable Rewards (RLVR) has emerged as a powerful technique for post-training large language models (LLMs), enabling strong performance across domains like mathematics and code generation. Recent works revealed an intriguing property of RLVR: despite updating all model parameters during training, the optimization process naturally concentrates changes on a sparse subset of parameters. Mukherjee et al. ~\yrcite{mukherjee2025reinforcementlearningfinetunessmall} showed that RLVR effectively finetunes only 5-30\% of parameters across different tasks, algorithms, and models, a finding validated by Zhu et al.~\yrcite{zhu2025path}. Critically, this sparsity emerges intrinsically from the policy optimization objective itself rather than from explicit regularization (including KL penalty) or sparse constraints.

These observations raise a natural question: if RLVR naturally updates only a small subset of parameters, what happens when we explicitly restrict training to a random sparse subset from the start? The answer depends on the domain. Chen et al.~\yrcite{chen2021elastic} found random sparse masks \emph{fail} on vision tasks and careful iterative pruning was required to find winning tickets. However, Xu \& Zhang~\yrcite{xu2024random} recently showed random masks at 0.001\% density \emph{succeed} for supervised fine-tuning (SFT) on NLP tasks, suggesting the regime matters. Does this extend to RLVR, which has fundamentally different dynamics: policy gradients, on-policy sampling, and implicit KL constraints~\cite{zhu2025path}?

Motivated by the observed intrinsic sparsity of RLVR  updates \cite{mukherjee2025reinforcementlearningfinetunessmall} and the distinct optimization geometry  revealed by Zhu et al. \yrcite{zhu2025path}, we investigate whether random sparse training can succeed in the RLVR regime. In this work, we demonstrate that training a randomly selected subset ($\leq$ 1\%) of parameters at $\geq$ 99\% sparsity matches or exceeds full-parameter RLVR finetuning performance. We validate this finding across three models (Qwen2.5 0.5B Base and Instruct and 1.5B models) and two distinct task domains (mathematical and logical reasoning).More surprisingly, we find that multiple independent random masks succeed - testing 20 different random 1\% masks reveal that they all achieve comparable performance, despite sharing less than 0.5\% of parameters in common (Jaccard similarity $\approx 0.005$).

We term this phenomenon the \textit{Multiple Ticket Hypothesis}: pretrained LLMs contain many sparse subnetworks capable of successful RLVR finetuning, and random sampling at sufficient density reliably finds one. This stands in contrast to the classical Lottery Ticket Hypothesis \cite{frankle2018lottery}, which posits that sparse subnetworks must be carefully identified through iterative pruning. Our findings suggest that for RLVR objective, the lottery has many winning tickets, and any random draw is likely to succeed.

\begin{figure*}[!th]
  \centering
  \begin{subfigure}{0.49\textwidth}
    \centering
    \includegraphics[width=\linewidth]{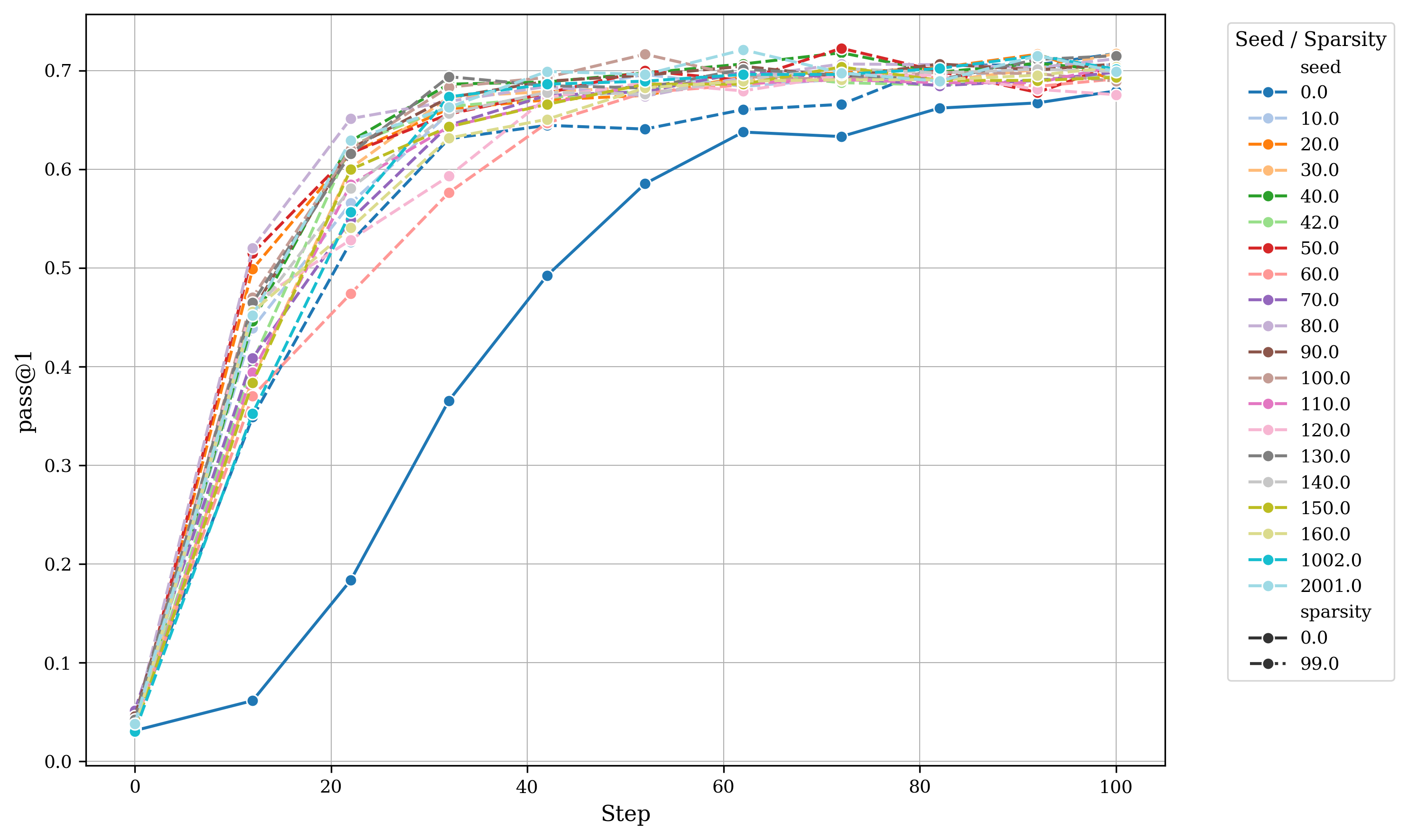}
    \caption{GSM8K (20 seeds)}
  \end{subfigure}\hfill
  \begin{subfigure}{0.49\textwidth}
    \centering
    \includegraphics[width=\linewidth]{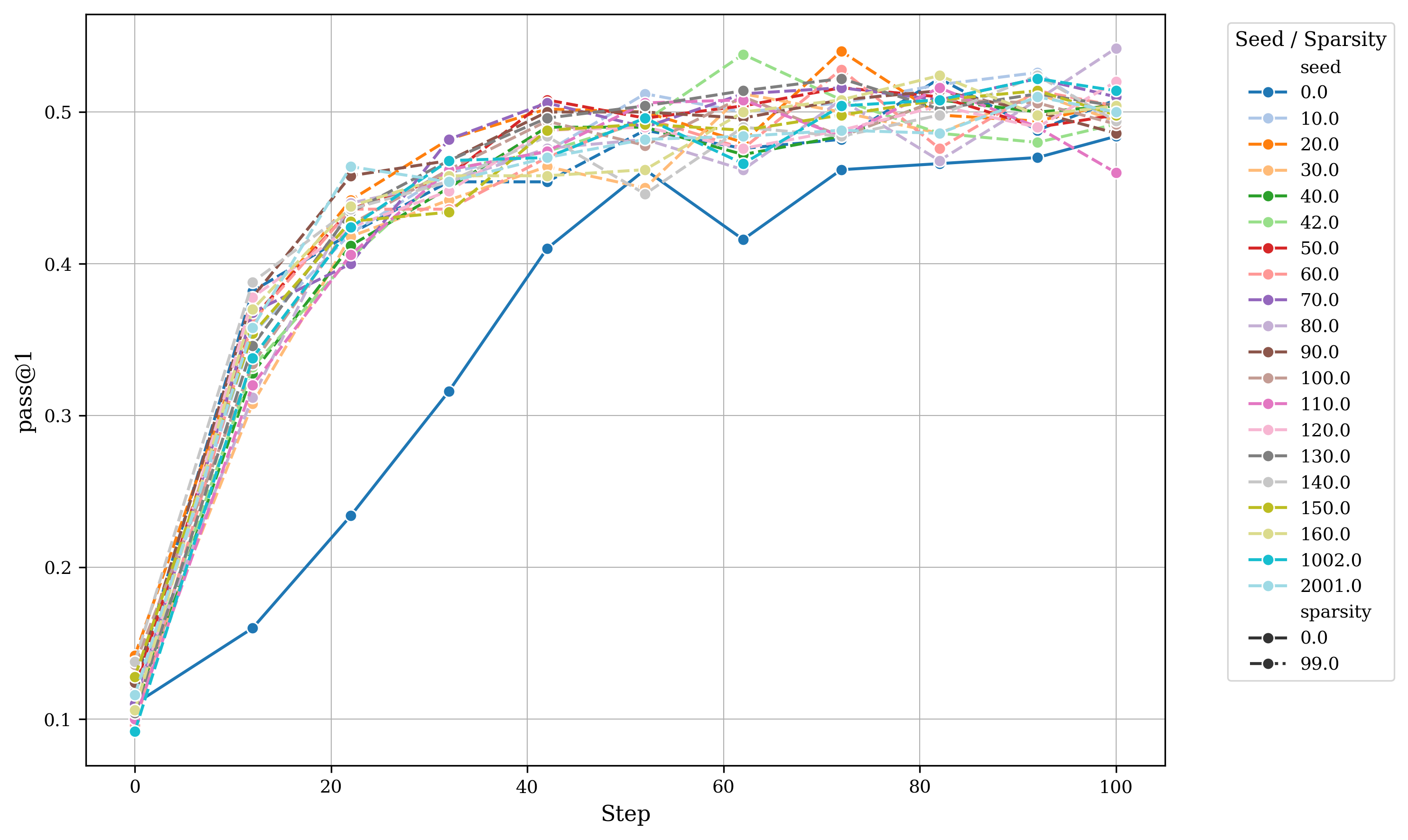}
    \caption{MATH-500 (20 seeds)}
  \end{subfigure}

  \vspace{0.8em}

  \caption{
    \textbf{Multiple random parameter subsets match or exceed full finetuning at 99\% sparsity on Qwen-2.5-1.5B.} 
    Performance of 20 random parameter subsets of Qwen-2.5-1.5B across 100 training steps 
    for GSM8K and MATH-500. 0\% sparsity means full parameter finetuning. 
    99\% sparsity indicates that 1\% of parameters were trained.
  }
  \label{fig:mth_15B_math}
\end{figure*}

To explain this parameter redundancy, we provide a plausible theoretical explanation grounded in the geometry of KL-constrained policy optimization. Building on prior work showing that RLVR updates satisfy implicit KL constraints~\cite{zhu2025path}, we demonstrate that these constraints restrict policy updates to a low-dimensional subspace at each optimization step. This geometric restriction creates vast parameter redundancy: many different parameter subsets can span the same effective update space, making random selection effective. We formalize this intuition through the Fisher information geometry, showing that KL constraints induce a low-rank structure that explains why random sparse training succeeds.

Beyond validating random sparse training as an effective baseline, our work offers practical and conceptual contributions. From a practical standpoint, training only 1\% of parameters provides immediate computational benefits for RLVR research, reducing memory requirements and enabling larger batch sizes. Conceptually, our findings reveal fundamental properties about how RLVR interacts with pretrained representations: the optimization landscape contains extensive flat regions with many viable solutions, suggesting that pretrained models are vastly overparameterized relative to the dimensionality required for RLVR tasks.

Our main contributions are:

\begin{enumerate}
    \item We show that random sparse training at $\geq$99\% sparsity matches full RLVR finetuning across different models and tasks.
    \item We further show that many random subset of parameters succeed and they share minimal parameter overlap, hence the \textbf{Multiple Ticket Hypothesis.}
    \item We provide an explanation for the success of the sparse random subset through the geometry of KL-constrained optimization, showing how trust region constraints create low-dimensional update subspaces that many random parameter subsets can span.
\end{enumerate}

\section{Background and Notation}
\subsection{Reinforcement Learning with Verifiable Rewards}
Following \citet{Guo_2025}, we adopt Group Relative Policy Optimization (GRPO)~\cite{shao2024deepseekmath},
an on-policy reinforcement learning algorithm that extends Proximal Policy Optimization (PPO) \cite{schulman2017proximal} while eliminating the need for a separate value critic.

GRPO estimates advantages using relative rewards within a group of sampled responses. For each prompt~$x$, the current policy~$\pi_\theta$ generates $G$ candidate outputs $\{y_1, \dots, y_G\}$.
Their verifiable rewards $\{R_1, \dots, R_G\}$ are normalized to obtain group-relative advantages
$\hat{A}_i = (R_i - \mu)/\sigma$, where $\mu$ and $\sigma$ are the group mean and standard deviation.

The policy is updated by maximizing a clipped surrogate objective that favors higher-reward responses, regularized by a KL-divergence penalty against a reference policy~$\pi_{\text{ref}}$:

\begin{equation}
\begin{aligned}
\mathcal{L}(\theta) 
&= \mathbb{E}_{x,y_i} \Biggl[ 
   \min\!\Bigl( r_i(\theta)\hat{A}_i,\;
                \text{clip }\!\bigl(r_i(\theta), 1\!-\!\varepsilon, 1\!+\!\varepsilon\bigr)\hat{A}_i \Bigr) \\
&\qquad - \beta\,\text{KL}\bigl(\pi_\theta(\cdot|x) \parallel \pi_{\text{ref}}(\cdot|x)\bigr)
   \Biggr]
\end{aligned}
\end{equation}

with $r_i(\theta) = \pi_\theta(y_i|x) / \pi_{\text{old}}(y_i|x)$ the importance ratio and $\beta$ controlling
regularization strength.

\textbf{Zero-RL Training.} Following \cite{Guo_2025}, 
we perform "zero-RL" training, starting directly from the pretrained 
base model without supervised fine-tuning. We also follow 
\cite{yu2025dapoopensourcellmreinforcement} in setting $\beta=0$ and using token-level policy  gradients which removes explicit KL regularization.

\subsection{Sampling Random Parameters}
To sample a random subset of parameters at x\% sparsity, we iterate through all layers of the model and sample a random subset of the parameters with a fixed seed.

\paragraph{Per-parameter-tensor masking.}
Let the model parameters be organized as a collection of tensors in different layers\footnote{In practice, the parameters are organized into tensors which are organized into layers.}.
\[
\{\theta^{(l)}\}_{l=1}^L.
\]
For each parameter tensor $\theta^{(l)}$, we independently construct a binary mask
\[
m^{(l)} \in \{0,1\}^{\mathrm{shape}(\theta^{(l)})}.
\]
Given a target sparsity level $s \in [0,1)$ with keep ratio $p = 1 - s$, we sample
\[
k^{(l)} = \lfloor p \cdot |\theta^{(l)}| \rfloor
\]
entries of $\theta^{(l)}$ uniformly at random \emph{without replacement} and set the corresponding $k^{(l)}$ entries of $m^{(l)}$ to one, with all remaining entries set to zero.

This procedure results in approximately uniform sparsity across parameter tensors while the total number of active parameters is also approximately at a ratio of $p$.

\subsection{Masked Training}
Masks are sampled once at initialization and held fixed throughout training. During training, gradients are computed densely for all parameters. The effective gradient used for optimization is given by
\[
\nabla_{\theta^{(l)}}^{\text{masked}} \mathcal{L}
=
m^{(l)} \odot \nabla_{\theta^{(l)}} \mathcal{L},
\]
where $\odot$ denotes elementwise multiplication. Parameters corresponding to zero entries in the mask receive zero gradient updates at all training steps and remain fixed at their initialization values. Optimizer states (e.g., momentum terms) are also maintained only for unmasked parameters.

\section{Experimental Setup}

We validate our findings across multiple models and tasks to demonstrate that random sparse training generalizes beyond specific configurations. 

\subsection{Models and Datasets}

\textbf{Models.} We conduct experiments across three models of varying scales: Qwen2.5-0.5B (Base and Instruct) and Qwen2.5-1.5B. \cite{qwen2025qwen25technicalreport}.

\textbf{Tasks.} We evaluate on two distinct domains:

\textit{Mathematical Reasoning.} We train Qwen2.5-1.5B and Qwen2.5-0.5B (Zero RL training \cite{Guo_2025, zeng2025simplerl}) on Hendrycks MATH \cite{hendrycks2021measuringmathematicalproblemsolving} and evaluate on both MATH-500 and GSM8K \cite{cobbe2021trainingverifierssolvemath}, reporting pass@1 accuracy.

\textit{Logical Reasoning.} We train Qwen2.5-0.5B-Instruct and evaluate on Alphabet Sort \footnote{https://huggingface.co/datasets/kalomaze/alphabetic-arxiv-authors-it1}, a multi-turn task that requires sorting an increasing list of names. Training and evaluation are limited to two turns of interaction.

Table~\ref{tab:models-tasks} summarizes the model-task combinations used in our experiments and we refer readers to Appendix ~\ref{sec:complete-exp-setup} for the complete table of hyperparameters.

\begin{table}[t]
  \caption{Summary of models, training tasks, and evaluation tasks.}
  \label{tab:models-tasks}
  \begin{center}
    \begin{small}
      \begin{sc}
        \begin{tabular}{lcc}
        \toprule
          Model & Train set & Eval set \\
          \midrule
          Qwen2.5-0.5B-It & Alphabet Sort & Alphabet Sort  \\
          Qwen2.5-0.5B          & GSM8K & GSM8K \\
          Qwen2.5-1.5B          & MATH & GSM8K, MATH-500 \\
          \bottomrule
        \end{tabular}
      \end{sc}
    \end{small}
  \end{center}
  \vskip -0.1in
\end{table}

\subsection{Training Configuration}

\textbf{Optimization.} Unless otherwise stated, we use the AdamW optimizer \cite{loshchilov2017decoupled}.

\textbf{Implementation.} All experiments use a fork of the Prime-RL library \cite{primeintellect2025prime-rl} with Prime Environments for standardized task interfaces.

\begin{figure*}[!th]
  \centering
  \begin{subfigure}{0.49\textwidth}
    \centering
    \includegraphics[width=\linewidth]{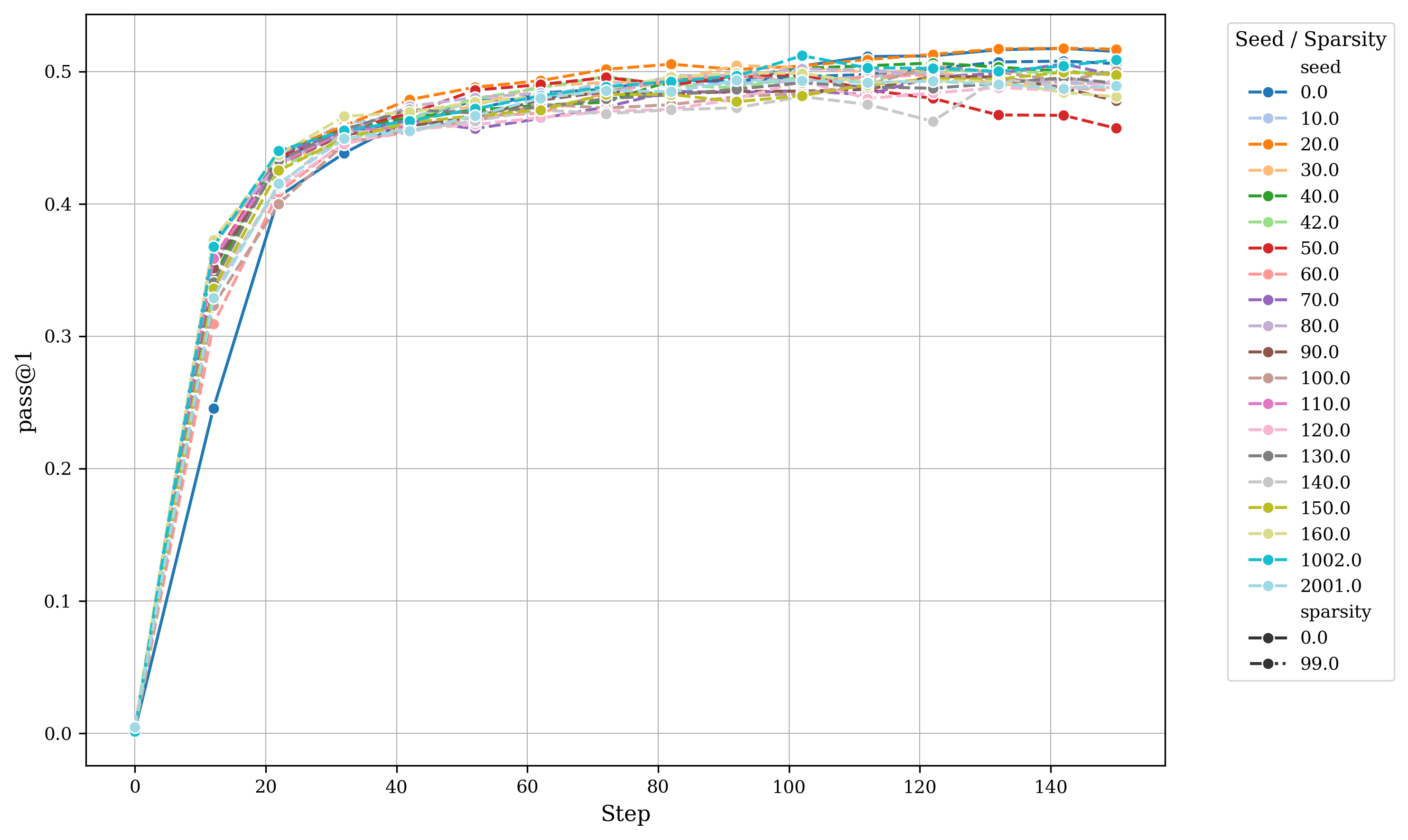}
    \caption{Alphabet sort (20 seeds)}
  \end{subfigure}\hfill
  \begin{subfigure}{0.49\textwidth}
    \centering
    \includegraphics[width=\linewidth]{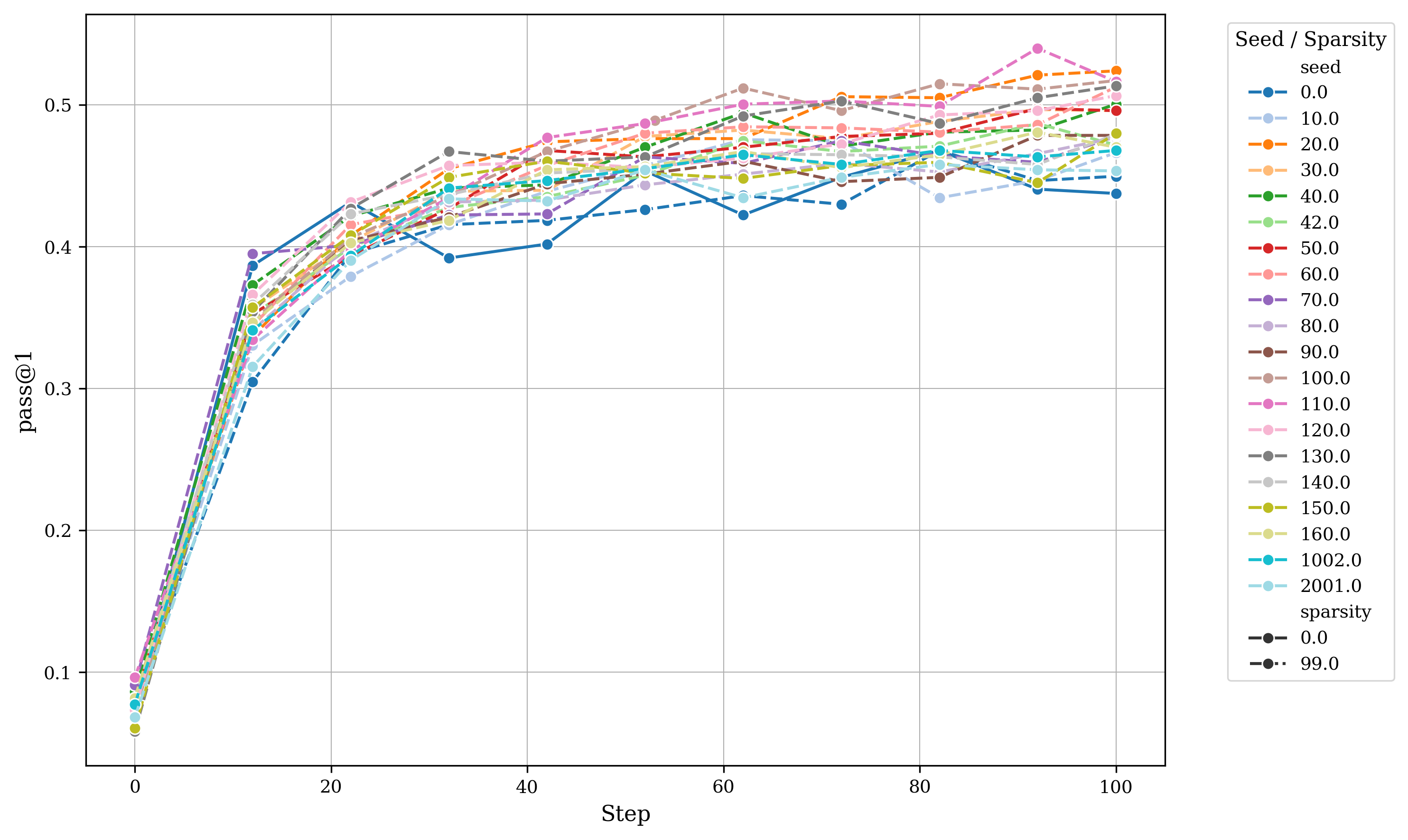}
    \caption{GSM8K (20 seeds)}
  \end{subfigure}

  \vspace{0.8em}

  \caption{
    \textbf{Multiple random parameter subsets match or exceed full finetuning at 99\% sparsity on Qwen-2.5-0.5B.} 
    Performance of 20 random parameter subsets of Qwen-2.5-0.5B across 100 training steps for GSM8K and 150 steps for Alphabet sort.
  }
  \label{fig:mth_05B_math}
\end{figure*}

\subsection{Experimental Design}
\label{sec:experimental-design}

Our experimental design tests whether random sparse training can match full-parameter RLVR finetuning across multiple random initializations.

\textbf{Multi-seed protocol.} For each model-task-sparsity configuration, we train multiple independent models using different random masks, each requiring different learning rates. All other hyperparameters, including the training seed, remain fixed across runs. We report mean performance across the five masks, with error bars indicating standard deviation. This design allows us to assess both the effectiveness of random sparse training and the variance introduced by different random parameter selections.

\textbf{Sparsity levels and learning rates.} Table~\ref{tab:sparsity_active_lr} summarizes the sparsity levels tested, the corresponding number of active parameters, and the learning rates used.

We provide complete hyperparameters and prompt templates in Appendix~\ref{sec:complete-exp-setup}.

\begin{figure*}[t]
  \centering
  \includegraphics[width=\textwidth]{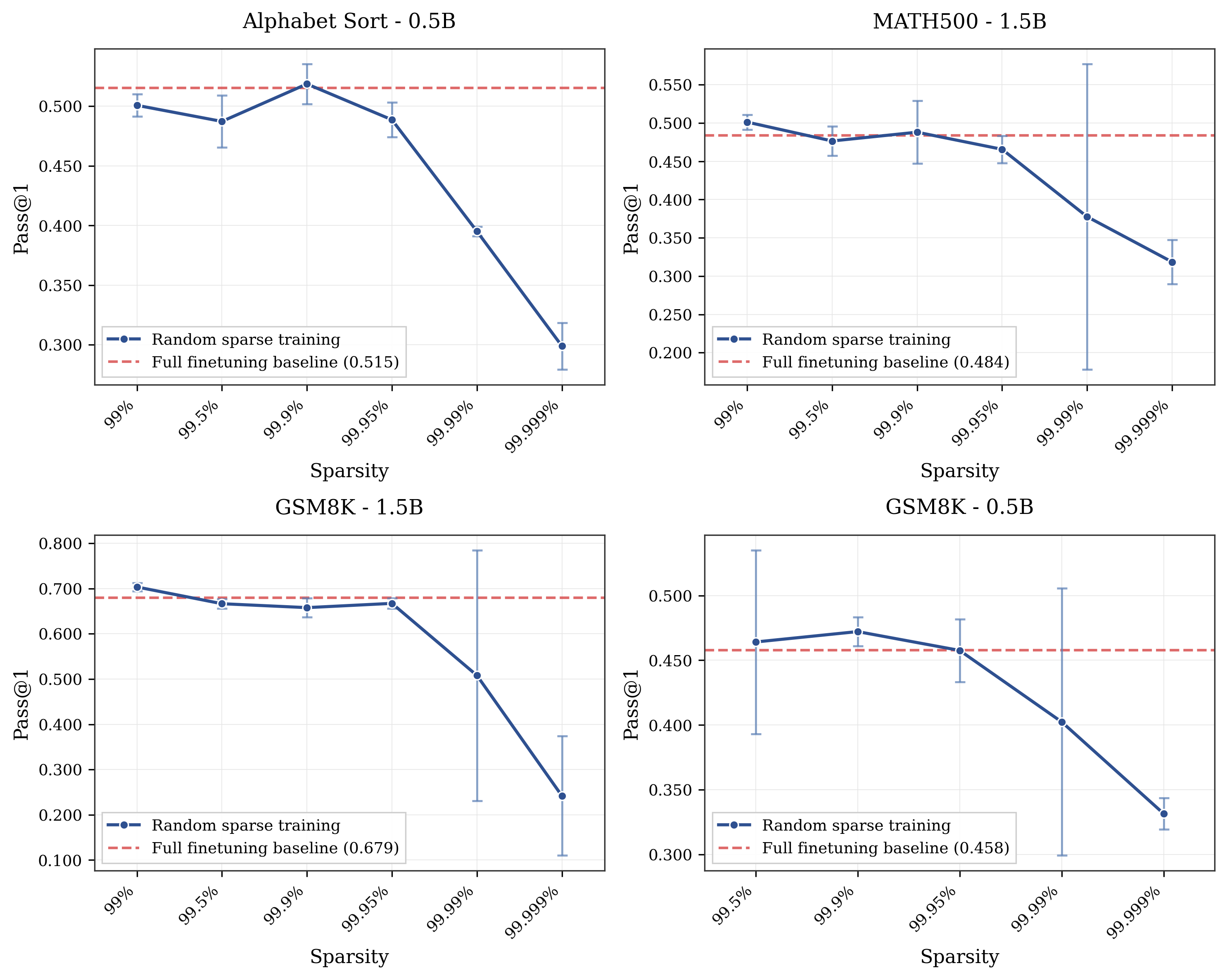}
  \caption{\textbf{Random sparse training matches full finetuning at different sparsities.} Validation performance across sparsity levels for three tasks. Error bars show variation across five random masks. Horizontal dashed lines indicate full-parameter baselines. All results use best learning rate from sweep for each configuration.}
  \label{fig:sparsity-sweep}
\end{figure*}

\begin{table}[h]
  \centering
  \caption{Sparsity levels, active parameter counts, and used learning rates.}
  \label{tab:sparsity_active_lr}
  \footnotesize
  \begin{tabular}{lrrrr}  
    \toprule
    Sparsity (\%) & \multicolumn{2}{c}{Active parameters} & \multicolumn{2}{c}{Learning rate} \\
    \cmidrule(lr){2-3} \cmidrule(lr){4-5}
                  & 0.5B          & 1.5B          & 0.5B     & 1.5B     \\
    \midrule
    0             & $\sim$ 0.5B    & $\sim$ 1.5B    & $1\times10^{-6}$     & $1\times10^{-6}$      \\
    99            & $\sim$ 4.9M    & $\sim$ 15M     & $1\times10^{-4}$     & $1\times10^{-4}$      \\
    99.5          & $\sim$ 2.4M    & $\sim$ 7.7M            & $1\times10^{-4}$      & $1\times10^{-4}$       \\
    99.9          & $\sim$ 490K    & $\sim$ 1.5M    & $1\times10^{-3}$     & $1\times10^{-3}$      \\
    99.95         & $\sim$ 247K    & $\sim$ 770K            & $1\times10^{-3}$     & $1\times10^{-3}$       \\
    99.99         & $\sim$ 49K     & $\sim$ 150K    & $1\times10^{-3}$     & $4\times10^{-3}$      \\
    99.999        & $\sim$ 4K      & $\sim$ 15K     & $5\times10^{-3}$     & $1\times10^{-2}$      \\
    \bottomrule
  \end{tabular}
\end{table}

\section{Results}
\label{sec:results}

We show that pretrained language models contain a large number of disjoint sparse subnetworks that each suffice for effective reinforcement learning with verifiable rewards (RLVR) finetuning. We call this the \emph{Multiple Ticket Hypothesis} (MTH). The evidence comes from training extremely sparse random subnetworks and observing consistent high performance despite negligible parameter overlap.

\subsection{Multiple Ticket Hypothesis: Many Disjoint Subnetworks Succeed}
\label{subsec:multiple-tickets}

At 99\% sparsity (1\% trainable parameters), we trained 20 independent models using independently sampled random masks on Qwen2.5-1.5B (GSM8K and MATH-500) and Qwen2.5-0.5B-Instruct (Alphabet Sort). 

Figures~\ref{fig:mth_15B_math} and \ref{fig:mth_05B_math} show that nearly all masks reach performance equal to or better than full-parameter finetuning. Average Jaccard similarity between any pair of successful masks is $\approx 0.005$ (Table~\ref{tab:jaccard}), exactly as expected for random selection at this density. 

This near-zero overlap rules out the existence of a single privileged subnetwork. Instead, pretrained models appear to contain combinatorially many viable sparse tickets for RLVR and any sufficiently large random draw succeeds.

With 490M (Qwen-2.5-0.5B) parameters and 99\% sparsity, there are theoretically 
$\binom{490M}{4.9M}$ possible masks. Our 20/20 success rate with 
$\leq$ 0.5\% overlap suggests the number of viable masks scales 
combinatorially, which is vastly more than the "one winning ticket" paradigm.

\subsection{Performance Matches Full Finetuning Down to Extreme Sparsity}
\label{subsec:sparsity-sweep}

We next sweep sparsity using 5 random masks per level (seeds 0, 10, 42, 1002, 2001). Figure~\ref{fig:sparsity-sweep} reports mean validation pass@1 $\pm$ std across masks.

Key observations:
\begin{itemize}
    \item \textbf{99\% to 99.95\% sparsity (1\% to 0.05\% parameters)}: Performance at these sparsities match, exceeds or slightly underperforms full finetuning on all tasks and models (GSM8K, MATH-500, Alphabet Sort; 0.5B and 1.5B scales).
    \item \textbf{99.99\% and beyond}: Sharp degradation, with collapse below $\sim$0.01--0.001\% trainable parameters.
\end{itemize}

The consistent transition point across tasks and scales suggests a task-agnostic lower bound on the effective trainable dimensionality required for RLVR, rather than a gradual degradation.

\subsection{Comparison to Structured Sparsity Baselines}
\label{subsec:baselines}

At fixed budget (99\% sparsity) (see Appendix \ref{sec:structured_baselines}, a random mask outperforms structured alternatives (first-layer only, last-layer only) on all the tasks and model combinations we test in this work. No architectural bias or importance scoring is needed; random selection is sufficient.

\subsection{Failure Cases}
\label{subsec:failures}

Model collapse is a well established failure mode in RL training \cite{kumar2024training, dasagi2019ctrl, deng2025grpo, dong2025rlpluscounteringcapabilityboundary}, but we also observe a lot more model collapse with the random masked training and we also attribute the high variance at higher sparsities in Figure \ref{fig:sparsity-sweep} to this. 


\subsection{Summary}
\label{subsec:results-summary}

Our experiments establish three main findings:
\begin{enumerate}
    \item Training a random 1\% of parameters is sufficient to match full RLVR finetuning across model scales and reasoning domains.
    \item Successful subnetworks are highly non-overlapping (Jaccard $\leq 0.005$), supporting the \emph{Multiple Ticket Hypothesis}: pretrained LLMs contain many, likely combinatorially many, viable sparse tickets for RLVR.
    \item Performance depends primarily on the \emph{number} of trainable parameters (effective dimensionality), not their specific identity.
\end{enumerate}

These results point to extreme functional redundancy in the parameter space of pretrained models when optimized under RLVR objectives.


\begin{table}[!th]
\centering
\caption{Jaccard similarity between pairs of successful masks. Values show mean across all mask pairs between the 5 masks for each model. Expected overlap for random masks are also shown.}
\label{tab:jaccard}
\begin{tabular}{lccc}
\toprule
\textbf{Configuration} & \textbf{99\%} & \textbf{99.9\%} & \textbf{99.99\%} \\
\midrule
Qwen2.5-0.5B & $0.005$ & $0.0005$ & $0.000055$ \\
Qwen2.5-1.5B & $0.005031$ & $0.000498$ & $0.0000528$ \\
\midrule
Expected (random) & $0.005$ & $0.0005$ & $0.00005$ \\
\bottomrule
\end{tabular}
\end{table}

\begin{figure}[ht]
  \vskip 0.2in
  \begin{center}
    \centerline{\includegraphics[width=\columnwidth]{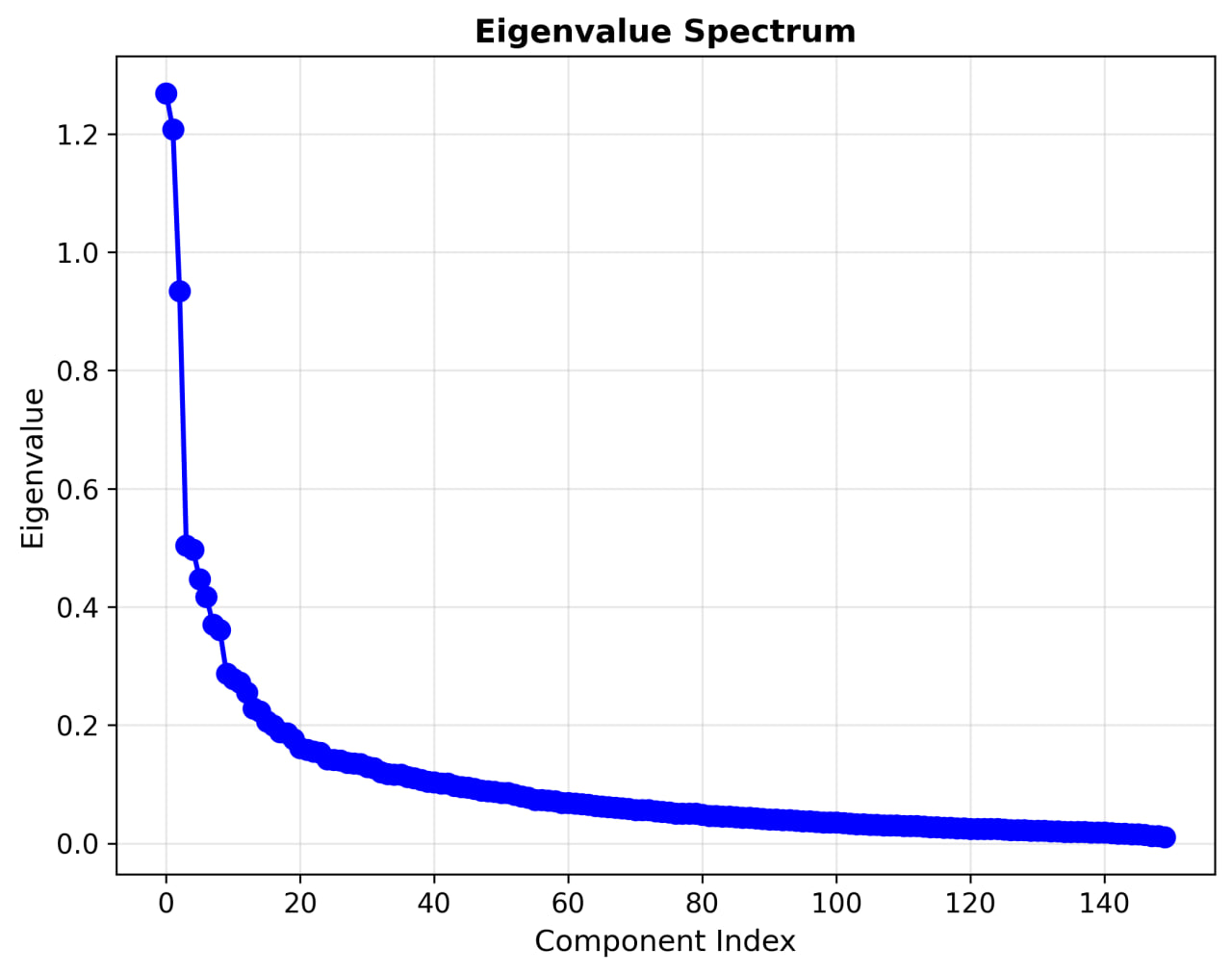}}
    \caption{
      Eigenspectrum analysis of the gradients.
    }
    \label{fig:eigenspectrum}
  \end{center}
\end{figure}

\section{Theoretical Explanation}
\label{sec:theory}

We explain the Multiple Ticket Hypothesis through the geometry of KL-constrained policy optimization. Our framework shows that per-step KL constraints restrict policy updates to a low-dimensional subspace, creating sufficient redundancy for arbitrary sparse masks to succeed.

\paragraph{Setup and Assumptions.}
Let $\pi_\theta(y|x)$ be a policy with parameters $\theta \in \mathbb{R}^d$, and let $F(\theta)$ denote its Fisher information matrix with eigenvalues $\lambda_1 \geq \cdots \geq \lambda_d$ and corresponding orthonormal eigenvectors $v_1, \ldots, v_d$. We assume: (1)~\textbf{Low effective rank}: the top $r \ll d$ eigenvalues capture nearly all Fisher variance, i.e., $\sum_{i=1}^r \lambda_i / \sum_{i=1}^d \lambda_i \geq 1 - \epsilon$ for small $\epsilon$; (2)~\textbf{Delocalized eigenvectors}: $\|v_i\|_\infty \leq \mu/\sqrt{d}$, meaning no single parameter dominates any eigenvector; (3)~\textbf{Small per-step updates}: $\|\Delta\| = O(\sqrt{K})$ where $K$ is the KL bound, ensuring second-order approximations hold.

\begin{proposition}[Low-Dimensional Policy Sensitivity]
\label{prop:lowdim}
Under assumptions (1) and (3), for any update $\Delta$ satisfying $D_{\mathrm{KL}}(\pi_{\theta+\Delta} \| \pi_\theta) \leq K$, the policy change depends only on the projection of $\Delta$ onto the top-$r$ eigenspace $U = \mathrm{span}\{v_1, \ldots, v_r\}$. Components orthogonal to $U$ have negligible impact.
\end{proposition}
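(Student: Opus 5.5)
The approach is to work with the second-order expansion of the KL divergence around $\theta$. Because $D_{\mathrm{KL}}(\pi_{\theta+\Delta}\|\pi_\theta)$ vanishes at $\Delta=0$ and has zero gradient there (the score has zero mean), Taylor expansion gives
\[
D_{\mathrm{KL}}(\pi_{\theta+\Delta}\|\pi_\theta) = \tfrac12 \Delta^\top F(\theta)\Delta + O(\|\Delta\|^3).
\]
By assumption (3), $\|\Delta\|=O(\sqrt{K})$, so the remainder is $O(K^{3/2})$ and is negligible against the quadratic term when $K$ is small. We then decompose $\Delta=\Delta_U+\Delta_\perp$, with $\Delta_U=\sum_{i\le r}\langle \Delta,v_i\rangle v_i$. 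Since the $v_i$ are orthonormal eigenvectors, the cross terms vanish:
\[
\Delta^\top F\Delta=\sum_{i\le r}\lambda_i\langle\Delta,v_i\rangle^2+\sum_{i>r}\lambda_i\langle\Delta,v_i\rangle^2 .
\]

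To bound the orthogonal part, we use assumption (1). Since $\lambda_i\le\lambda_{r+1}$ for $i>r$, the orthogonal contribution is at most $\lambda_{r+1}\|\Delta_\perp\|^2$. Assumption (1) gives $\sum_{i>r}\lambda_i\le \epsilon\,\mathrm{tr}F$, hence $\lambda_{r+1}\le \epsilon\,\mathrm{tr}F/(d-r)$, or at least $\lambda_{r+1}\le\epsilon\,\mathrm{tr}F$. Combined with $\|\Delta_\perp\|^2\le\|\Delta\|^2=O(K)$, this yields $\tfrac12\Delta_\perp^\top F\Delta_\perp=O(\epsilon\,\mathrm{tr}F\cdot K)$. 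Relative to the natural scale of the top eigenspace, this is a factor $\epsilon$ smaller.

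Next we translate the KL statement into a statement about the policy change itself. For any bounded test function $g$ (for example, the expected reward), we have $\mathbb{E}_{\pi_{\theta+\Delta}}g-\mathbb{E}_{\pi_\theta}g=\langle\nabla_\theta \mathbb{E}g,\Delta\rangle+O(\|\Delta\|^2)$, with $\nabla_\theta\mathbb{E}g=\mathbb{E}[g\,s]$, where $s=\nabla\log\pi_\theta$ is the score. By Cauchy--Schwarz in the $\pi_\theta$ inner product, $|\mathbb{E}[g\,s^\top\Delta_\perp]|\le\|g\|_{L^2}\sqrt{\Delta_\perp^\top F\Delta_\perp}$, because $\mathbb{E}[(s^\top\Delta_\perp)^2]=\Delta_\perp^\top F\Delta_\perp$. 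The first-order effect of $\Delta_\perp$ on any observable is therefore $O(\sqrt{\epsilon\,\mathrm{tr}F\,K})$. Equivalently, by Pinsker's inequality the total-variation change due to $\Delta_\perp$ is controlled by $\sqrt{\tfrac12\Delta_\perp^\top F\Delta_\perp}$. Combining the two steps, $\pi_{\theta+\Delta}$ and $\pi_{\theta+\Delta_U}$ differ only by this small quantity plus higher-order terms, which is the claim.

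The main obstacle is making ``negligible'' precise. Assumption (1) is a relative trace condition, not an absolute bound, so the comparison must be against $\mathrm{tr}F$. The remainder terms also need uniform control of third derivatives of $\log\pi$, which the stated assumptions do not supply. I would handle this by adding a local smoothness bound on the third derivative of the KL divergence (implicitly part of assumption (3), ``second-order approximations hold''), and by stating the conclusion as an explicit bound of order $\sqrt{\epsilon K\,\mathrm{tr}F}+O(K)$ on the total-variation change attributable to $\Delta_\perp$.
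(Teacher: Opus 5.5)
Your argument follows the paper's own route. You expand the KL to second order as $\tfrac12\Delta^\top F\Delta$, split $\Delta$ along the Fisher eigenbasis, bound the tail through $\lambda_{r+1}$ and assumption (1), and control the policy effect of $\Delta_\perp$ through $\mathbb{E}[(s^\top\Delta_\perp)^2]=\Delta_\perp^\top F\Delta_\perp$. Your Cauchy--Schwarz/Pinsker step is a sharper version of the paper's Step~4.

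Two smaller points first. Your ``hence $\lambda_{r+1}\le\epsilon\,\mathrm{tr}F/(d-r)$'' is false. $\lambda_{r+1}$ is the \emph{largest} tail eigenvalue, so it is at least the tail average $\sum_{i>r}\lambda_i/(d-r)$. Only $\lambda_{r+1}\le\sum_{i>r}\lambda_i\le\epsilon\,\mathrm{tr}F$ follows. The paper's proof uses exactly this reversed inequality, together with $\mathrm{tr}F=O(d)$, to assert $\lambda_{r+1}=O(\epsilon)$, so you were right to fall back on the weaker bound. Also, using assumption (3) to get $\|\Delta_\perp\|^2\le\|\Delta\|^2=O(K)$ is an improvement over the paper's $\|\Delta_\perp\|^2=O(K/\lambda_{\min})$. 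The paper's bound is vacuous, since $\lambda_{r+1}\ge\lambda_{\min}$ gives $\lambda_{r+1}K/\lambda_{\min}\ge K$.

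The genuine gap is the word ``negligible.'' Write (3) as $\|\Delta\|^2\le CK$. What you actually prove is $\tfrac12\Delta_\perp^\top F\Delta_\perp\le\tfrac12\,\epsilon\,C\,\mathrm{tr}F\cdot K$. This is small compared with the KL budget $K$ only if $\epsilon\,C\,\mathrm{tr}F\ll1$. Assumption (1) is invariant under $F\mapsto\alpha F$, so it cannot supply that. Your comparison with $\mathrm{tr}F\cdot K$ (``a factor $\epsilon$ smaller'') is also not the relevant one, because the constraint caps the $U$-part at $2K$ as well.

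In fact the claim does not follow from (1) and (3) at all. Take $\lambda_1=\cdots=\lambda_r=M$, $\lambda_{r+1}=1$, and all other eigenvalues $0$, with $M\ge(1-\epsilon)/(\epsilon r)$ so that (1) holds. Then $\Delta=\sqrt{2K}\,v_{r+1}$ satisfies (3), lies entirely in $U^\perp$, and has $D_{\mathrm{KL}}\approx K$. It uses the whole budget, and an observable correlated with $s^\top v_{r+1}$ moves by order $\sqrt{K}$, the maximum the constraint allows.

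The underlying reason is that a KL ball is elongated along low-Fisher directions, so the trust region by itself never suppresses them. Only the Euclidean cap in (3) does, and only when $C\lambda_{r+1}\ll1$ in absolute units.

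Your proposed repair, stating an explicit bound of order $\sqrt{\epsilon K\,\mathrm{tr}F}+O(K)$, is the honest version. It must come with an extra absolute hypothesis, such as $C\lambda_{r+1}\ll1$ or the stronger $\epsilon\,C\,\mathrm{tr}F\ll1$ that your bound uses. Without it you cannot close with ``which is the claim.''
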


\begin{proposition}[Sufficiency of Random Masks]
\label{prop:random}
Under assumptions (1)--(3), let $S \subset \{1, \ldots, d\}$ be a random subset of size $k > r$. With high probability, there exists an update $\Delta_S$ supported on $S$ that approximates any KL-feasible update in the top-$r$ subspace: $\|\Delta_\parallel - \Delta_S\|_F \leq \eta$, where $\|\cdot\|_F$ is the Fisher norm and $\eta \to 0$ as $k$ increases.
\end{proposition}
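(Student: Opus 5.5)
The plan is to reduce the claim to a restricted-isometry statement for the $d\times r$ matrix $V_r=[v_1,\dots,v_r]$ restricted to the rows indexed by $S$. Fix a target $\Delta_\parallel = V_r c\in U$ that is KL-feasible. By Proposition~\ref{prop:lowdim} and assumption (3), $\Delta^\top F\Delta \approx 2K$, so $\sum_{i\le r}\lambda_i c_i^2 \lesssim 2K$. We look for $\Delta_S = P_S w$, where $P_S$ is the coordinate projection onto $S$, whose projection onto $U$ equals $\Delta_\parallel$. Concretely, we set $\Delta_S = P_S V_r (V_r^\top P_S V_r)^{-1} c$. Then $V_r^\top \Delta_S = c$ exactly, provided the $r\times r$ matrix $M_S := V_r^\top P_S V_r = \sum_{j\in S} u_j u_j^\top$ is invertible, where $u_j\in\mathbb{R}^r$ is the $j$-th row of $V_r$.

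\textbf{Step 1: $M_S$ concentrates.} We have $\mathbb{E}\, M_S = \frac{k}{d} I_r$ (sampling without replacement, $\sum_j u_ju_j^\top = I_r$). By assumption (2), $\|u_j\|_2^2 \le r\mu^2/d$. Matrix Chernoff (valid for sampling without replacement by Gross--Nesme) then gives
\[
\Bigl\|\tfrac{d}{k}M_S - I_r\Bigr\|\le \delta \quad\text{with probability at least } 1-2r\exp\!\bigl(-c\,\delta^2 k/(\mu^2 r)\bigr).
\]
So $k \gtrsim \mu^2 r\log r/\delta^2$ suffices. This incoherence/matrix-concentration step is the main technical obstacle. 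The statement's ``$k>r$'' is too weak, and I would state the result with $k\gtrsim \mu^2 r\log(r)$ as the honest condition.

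\textbf{Step 2: the Fisher-norm error.} Decompose $\Delta_S = \Delta_\parallel + \Delta_\perp$ with $\Delta_\perp\perp U$, since the $U$-component equals $V_r c$ by construction. Then $\|\Delta_\parallel - \Delta_S\|_F^2 = \Delta_\perp^\top F \Delta_\perp \le \lambda_{r+1}\|\Delta_\perp\|_2^2$.

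Next, $\|\Delta_S\|_2^2 = c^\top M_S^{-1} c \le \frac{d}{k(1-\delta)}\|c\|_2^2$. To bound $\lambda_{r+1}$, use assumption (1): $\lambda_{r+1}\le \sum_{i>r}\lambda_i \le \epsilon\,\mathrm{tr}F$. Combining,
\[
\eta^2 \le \epsilon\,\mathrm{tr}(F)\,\frac{d}{k(1-\delta)}\,\|c\|_2^2 ,
\]
and $\|c\|_2^2\le 2K/\lambda_r$ from KL feasibility. Hence $\eta = O\bigl(\sqrt{\epsilon K d\,\mathrm{tr}F/(k\lambda_r)}\bigr)$, which decreases as $k$ grows and vanishes as $\epsilon\to0$.

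A better dependence comes from a refinement: the orthogonal part has size $\|\Delta_\perp\|_2^2 = \|\Delta_S\|^2-\|c\|^2 \le (\tfrac{d}{k(1-\delta)}-1)\|c\|^2$. Strictly, then, $\eta\to0$ requires $\epsilon\to0$ or $k\to d$. I would note this and phrase ``$\eta\to0$ as $k$ increases'' via this $(d/k-1)$ factor.

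\textbf{Step 3: uniformity over targets and KL feasibility of $\Delta_S$.} The construction is linear in $c$ and Step 1 is a single event independent of $c$, so one high-probability event covers all KL-feasible $\Delta_\parallel$ simultaneously. Finally, $\Delta_S^\top F\Delta_S \le 2K+\eta^2$, so $\Delta_S$ is itself (approximately) KL-feasible, and by Proposition~\ref{prop:lowdim} it induces essentially the same policy change as $\Delta_\parallel$.
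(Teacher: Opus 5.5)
Your argument is correct. It shares the paper's skeleton: restrict $V_r$ to the rows in $S$, show $\frac{d}{k}\tilde V_r^\top\tilde V_r\approx I_r$ by matrix concentration, then pick coefficients $c'$ with $\Delta_S=P_SV_rc'$. The way you control the error, however, is genuinely different.

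The paper solves a Fisher-norm least-squares problem over $c'$ and asserts an error $\kappa\|V_rc\|_F$ with $\kappa\to0$. It leans on a claimed concentration of $\frac{d}{k}\tilde V_r^\top F\tilde V_r$ around $\mathrm{diag}(\lambda_1,\dots,\lambda_r)$. That claim does not follow from its Lemma, because off-diagonal entries of $F$ survive in $P_SFP_S$ with probability about $k^2/d^2$, not $k/d$. The Lemma itself also drops the factor $r$ in $\|X_j\|_2$ and treats the inclusion indicators as independent. Your row bound $r\mu^2/d$ and the without-replacement Chernoff repair both issues.

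Your choice $c'=M_S^{-1}c$ matches the $U$-component exactly, so the entire error lies in $U^\perp$ and only the unweighted Gram matrix $M_S$ is needed. This gives the explicit bound $\eta^2\le\lambda_{r+1}\bigl(c^\top M_S^{-1}c-\|c\|_2^2\bigr)$. It holds on a single event that is uniform over all targets, and it also gives approximate KL-feasibility of $\Delta_S$. The paper's least squares is optimal over $\mathrm{range}(P_SV_r)$, so it can only improve on your particular $\Delta_S$. But the paper never quantifies it, and your construction supplies exactly the missing estimate for $\kappa$.

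Your bound also exposes what the paper glosses over. It shrinks only through $\epsilon$ and the factor $d/k-1$. So for $k\ll d$ it is small only under a gap-type condition, namely that $\lambda_{r+1}d/(k\lambda_r)$ is small. The hypothesis must also be $k\gtrsim\mu^2r\log r$ rather than $k>r$, as the paper's own remark after its Lemma concedes. Indeed $k>r$ alone fails in general: block-indicator eigenvectors with $\mu=\sqrt r$ require $S$ to hit all $r$ blocks, a coupon-collector event.

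One cosmetic fix: KL-feasibility gives $\Delta^\top F\Delta\lesssim 2K$, not $\approx 2K$.
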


Complete theoretical outline appear in Appendix~\ref{app:proofs}. The key insight is that KL constraints create a trust region aligned with Fisher eigenvectors. Since only the top $r$ directions matter for policy change (Proposition~\ref{prop:lowdim}) and eigenvectors are delocalized across parameters, random sampling of $k > r$ parameters reliably captures this subspace (Proposition~\ref{prop:random}). This explains both why random masks work and why multiple independent masks succeed despite minimal overlap---they each span the same low-dimensional policy-relevant subspace through different parameter combinations.

\paragraph{Connection to Empirics.}
Our eigenspectrum analysis (Figure~\ref{fig:eigenspectrum}) reveals an effective rank $r \approx 44$ for Qwen2.5-0.5B on Alphabet Sort, representing an intrinsic dimensionality of $\sim$0.0000089\% of 490M parameters. This explains the observed sparsity threshold: performance degrades sharply below $\sim$0.01\% trainable parameters (Figure~\ref{fig:sparsity-sweep}). The combinatorial number of valid masks, $\binom{d}{k}$ choices each spanning the same $r$-dimensional subspace, directly yields the Multiple Ticket Hypothesis.

\begin{remark}
Our experiments use $\beta=0$ (no explicit KL penalty), yet Zhu et al.~\yrcite{zhu2025path} established that on-policy RLVR satisfies implicit KL constraints through clipping and on-policy sampling. Thus $\beta=0$ represents a conservative test: success under weaker implicit constraints implies success under stronger explicit regularization.
\end{remark}

\section{Related Work}

\subsection{Post-Training of Large Language Models and RLVR}
Pretrained large language models \cite{radford2018improving,brown2020language,touvron2023llamaopenefficientfoundation,anil2023palm,achiam2023gpt,chowdhery2023palm, li2024multimodal} require post-training to achieve strong performance on downstream tasks. The main paradigms include supervised fine-tuning (SFT) 
\cite{chung2024scaling, wei2021finetuned, dodge2020fine, howard2018universal} and reinforcement learning (RL), often applied sequentially \cite{ouyang2022training,ziegler2019fine,Guo_2025}.

Recent progress in LLM reasoning ~\cite{Guo_2025} shows that Reinforcement Learning with Verifiable Rewards (RLVR) excels in domains with clear verification and correctness checks, such as mathematics, code generation, and logical reasoning. These methods typically build on policy optimization algorithms such as PPO \cite{ouyang2022training} or the more memory-efficient GRPO \cite{shao2024deepseekmath} and variants that further improve them \cite{yu2025dapoopensourcellmreinforcement, liu2025understanding, liu2025prorl, zheng2025group}.

RLVR's optimization dynamics have drawn attention. Mukherjee et al. \cite{mukherjee2025reinforcementlearningfinetunessmall} showed that despite updating all parameters, RLVR concentrates changes on 5-30\% of them. Zhu et al. \cite{zhu2025path} revealed implicit per-step KL constraints in RLVR even without explicit regularization ($\beta=0$), distinguishing it from SFT. These observations inspired our explicit sparse training experiments, but unlike their focus on analyzing post-hoc subnetworks, we show that random sparse subnetworks at $\geq$99\% sparsity match or exceed full RLVR performance from the start, without prior training or identification.

\subsection{Sparsity in Neural Network Training and Lottery Tickets}
The Lottery Ticket Hypothesis~\cite{frankle2018lottery, malach2020proving} established that dense networks contain sparse subnetworks matching full-model performance, though identifying these ``winning tickets'' requires iterative pruning. Subsequent work extended LTH to various domains, including deep reinforcement learning~\cite{yu2019playing, vischer2021lottery, graesser2022state}, and identified task-specific subnetworks in language models that can mitigate catastrophic forgetting~\cite{panda2024lottery, panigrahi2023task}. Chen et. al ~\yrcite{chen2021elastic} showed that random sparse subnetworks underperformed winning tickets identified by iterative pruning on vision tasks. 


In contrast to LTH's emphasis on a single privileged subnetwork identified via pruning, our Multiple Ticket Hypothesis reveals that pretrained LLMs contain many viable sparse subnetworks for RLVR and sampling a random subset of parameters at sufficient density reliably discovers one.

\subsection{Parameter-Efficient Fine-Tuning (PEFT)}
PEFT methods cut costs for adapting / finetuning large models and numerous studies have established that they operate in constrained parameter subspaces \cite{ben-zaken-etal-2022-bitfit, hu2022lora, albert2025randlora, ansell2024scaling, zhang2023adalora, wu2024reft, li-liang-2021-prefix, pmlr-v97-houlsby19a, malladi2023fine}. 

Of most interest is LoRA \cite{hu2022lora}, which constrains update to low rank matrices i.e it learns the low-rank subspace during training via the adapter module. Schulman, John and Thinking Machines Lab ~\yrcite{schulman2025lora} further showed that rank-1 LoRA matches full RL finetuning. In this work, we sample the subspace, rather than learn it and our focus is on RLVR, while most of the works in this domain have been focused on SFT.


\subsection{Sparsity in RLVR}

Mukherjee et al. \yrcite{mukherjee2025reinforcementlearningfinetunessmall} observed RLVR's intrinsic sparse updates (5-30\%), conjecturing post-hoc subnetworks recover performance. Zhu et al. \yrcite{zhu2025path} linked sparsity to model-conditioned bias, emphasizing off-principal updates and spectral preservation. We align with this geometry but complement it: our framework, rooted in per-step KL constraints inducing low-dimensional subspaces, explains why arbitrary random masks at $>$99\% sparsity succeed without prior training. This shifts emphasis from describing sparsity to leveraging redundancy for efficient RLVR, supporting our Multiple Ticket Hypothesis over singular subnetworks.

\subsection{Random Sparse Training for Fine-Tuning}

Most related, Xu \& Zhang \yrcite{xu2024random} demonstrated random masks at 0.001\% trainable parameters match full SFT on NLP (language understanding and comprehension tasks), attributing flatter landscapes (smaller Hessians) and higher learning rates to overparameterization, analyzed via linear regression. Concurrently, Sampreeth et al. explored using expander graph masks instead of random masks for the initial subnetworks.

We extend this to RLVR by showing that RLVR shows greater redundancy, with many independent masks succeeding. Mechanisms differ—SFT's unconstrained updates vs. RLVR's policy gradients, on-policy sampling, and implicit KL constraints \cite{zhu2025path}—leading to RLVR-specific low-rank Fisher structure from trust regions, not general flat landscapes. We provide empirical multiplicity evidence via Jaccard analysis and test on reasoning tasks with Qwen models, unlike their SFT classification task as RLVR success isn't implied by SFT due to differing dynamics.

\subsection{Fisher Information, Policy Optimization Geometry, and Intrinsic Dimensionality}

Fisher information aids understanding training dynamics, parameter importance for transfer, and generalization. In policy optimization, natural gradient methods use it for stable updates via metric tensors.

We build on this by showing KL constraints in RLVR create low-rank gradient Fisher matrices, restricting updates to low-dimensional subspaces that enable random sparse training. This geometric view explains mask success and multiplicity.

The success ties to intrinsic dimensionality: Aghajanyan et al. \yrcite{aghajanyan2021intrinsic} showed fine-tuning needs few parameters despite billions total, aligning with overparameterization theory \cite{allen2019convergence,du2018gradient}. Our framework applies this to RLVR, where KL induces policy-relevant low-dimensional subspaces, and delocalization lets random masks span them. The Multiple Ticket Hypothesis follows from trust-region methods in overparameterized networks.

\section{Discussion and Conclusion}
\label{sec:discussion}

Our investigation into random sparse training for Reinforcement Learning with Verifiable Rewards (RLVR) reveals a striking property of pretrained language models: the existence of combinatorially many viable subnetworks capable of matching full-parameter performance. This \textit{Multiple Ticket Hypothesis} (MTH) fundamentally shifts our understanding of parameter redundancy in the RLVR regime.

\textbf{RLVR optimization} We showed in Section \ref{sec:theory} that the the Fisher is low-dim in RLVR. We conjecture that RLVR is locally optimizing a flat loss landscape. This intuition is further surported by preliminary experiments from Mukherjee et al., \yrcite{Mukherjee2025AdamSGD} where they showed that even simple optimizers like SDG also match and outperform optimizers such as Adam(W).

These findings establish random sparse training as a strong baseline for parameter-efficient RLVR and suggest new directions for understanding how reinforcement learning interacts with pretrained language model representations.

A potential point of contention is the fact that Mukherjee et al., \yrcite{mukherjee2025reinforcementlearningfinetunessmall} claim that the updates during RLVR are nearly full rank. We empirically show that the gradients are effectively low-rank. This distinction comes from the fact that first, Mukherjee et al., \yrcite{mukherjee2025reinforcementlearningfinetunessmall} estimate the rank from $\Delta = \theta_{final} - \theta_{init}$, while we estimate effective rank from the gradients (Appendix \ref{sec:eigenspectrum-methodology}). It's also possible that a matrix is nearly full rank, but has low effective rank.

\paragraph{Practical Implications for Efficiency.} Beyond its theoretical interest, the MTH offers immediate practical benefits for RLVR research. By training only 1\% of parameters, researchers can significantly reduce the memory footprint of optimizer states and gradients, enabling the finetuning of larger models on consumer-grade hardware or the use of larger batch sizes. Unlike methods like LoRA, which require learning a low-rank adapter, random sparse training utilizes the model’s existing weights directly, acting as a highly efficient, unstructured Parameter-Efficient Fine-Tuning (PEFT) baseline.

\paragraph{Redundancy and Pretraining.} Crucially, our findings highlight that this redundancy is a byproduct of the pretraining process itself. The success of random masks when training from scratch suggests that pretraining ``delocalizes'' knowledge across the parameter space, creating the very landscape that RLVR subsequently navigates.

\section{Limitations and Future Work}
\label{sec:limitations}

While the Multiple Ticket Hypothesis provides a robust framework for understanding RLVR sparsity, several limitations remain:

\begin{itemize}
    \item \textbf{Catastrophic Forgetting.} 
Catastrophic forgetting has been explored in deep learning. Specifically in generative AI, skills acquired during pretraining are lost during subsequent finetuning stages \cite{shenfeld2025rl, kirkpatrick2017overcoming, luo2025empirical, guo2025comprehensive}.

It's been established that RLVR forgets less than SFT and we tie this to Zhu et al., \yrcite{zhu2025path}'s work, as Zhu et al., \yrcite{zhu2025path} showed that naturally, RLVR chooses low principal weight directions. 

We conjecture that using random sparse mask for RLVR training is likely to lead to more catastrophic forgetting because random sampling doesn't guarantee that principal weights aren't selected and put under RLVR's optimization pressure, which is likely to lead to catastrophic forgetting. We leave further exploration of this to future work.

    \item \textbf{Task Complexity and Sparsity Thresholds:} We observed a consistent performance collapse when trainable parameters dropped below $\sim$0.01\%. While this threshold held across our reasoning tasks, more complex, multi-domain and longer horizon tasks (which is what the bulk of RLVR in practise is used for today) might require a higher ``intrinsic dimensionality'' and thus a lower maximum sparsity.
    
    \item \textbf{Model Scale:} Our experiments were conducted on models up to 1.5B parameters. While the MTH appears to hold as model size increases, further validation on frontier-scale models (e.g., 70B+) is necessary to confirm if the ratio of ``winning tickets'' remains constant or grows with scale. We conjecture however that the MTH findings will hold for larger models as increasingly larger models are more overparameterized \cite{wang2025larger} and they would exhibit more parameter redundancy.
    
    \item \textbf{Stability and Model Collapse:} We noted a higher frequency of model collapse at extreme sparsities. This suggests that while viable tickets exist at 99.9\% sparsity, the optimization path to find them becomes increasingly narrow and sensitive to hyperparameter choices like learning rate and is deserving of more attention.
\end{itemize}

\newpage




\section*{Impact Statement}
This paper presents work whose goal is to advance the field of Machine
Learning. There are many potential societal consequences of our work, none
which we feel must be specifically highlighted here.

\nocite{langley00}

\bibliography{example_paper}
\bibliographystyle{icml2026}

\newpage
\appendix
\onecolumn



\section{Complete Experimental Setup}
\label{sec:complete-exp-setup}
\subsection{Hyperparameters}
\begin{table*}[h]
  \centering
  \caption{Hyperparameters table}
  \label{tab:hyper-param_table}
  \footnotesize
  \begin{tabular}{lccc}
    \toprule
    Hyperparameter & \multicolumn{3}{c}{Training Setup} \\
    \cmidrule(lr){2-4}
    & 0.5B (Alphabet Sort) & 0.5B (GSM8k) & 1.5B (Maths) \\
    \midrule
    training steps & 150 & 100 & 100 \\
    batch size & 512 & 512 & 512\\
    num rollouts & 16 & 16 & 8\\
    max tokens & 128 & 1024 & 2048\\
    sampling temperature & 1.0 & 1.0 & 1.0 \\
    Optimizer & AdamW & AdanW & AdamW \\
    clip ratio & 0.2 & 0.2 & 0.2 \\
    KL constant & 0 & 0 & 0 \\
    weight decay & 0.01 & 0.01 & 0.01 \\
    max gradient norm  & 1.0 & 1.0 & 1.0 \\
    training seed & 42 & 42 & 42 \\
    \midrule
    eval interval & 10 & 10 & 10\\
    eval sampling temperature & 0.7 & 0.9 & 0.9 \\
    eval max tokens & 128 & 1024 & 1024\\
    eval metric & pass@1 & pass@1 & pass@1 \\
    num eval samples & 1024 & 1319 & 1319 \\
    eval seed & 2001 & - \\
    
    \midrule
  \end{tabular}

  \vskip 0.1in
\end{table*}

For alphabet sort, max number of turns and min number of turns are set to 2.

\subsection{Prompt Template}
\textbf{Mathematical reasoning}

We use the following instruction template for all training and evaluation rollouts, with only the task-specific instruction changing):

\begin{tcolorbox}[colback=gray!8!white, colframe=gray!40!black, boxrule=0.5pt, arc=3pt, left=8pt, right=8pt, top=6pt, bottom=6pt]
\begin{verbatim}
SYSTEM:
You are a helpful mathematical AI assistant. Please reason step by step 
and put your final answer within \boxed{}

USER:
[TASK_DESCRIPTION]

ASSISTANT:
\end{verbatim}
\end{tcolorbox}

\vspace{0.1in}

\noindent

\textbf{Alphabet sort}
We do not use any system prompt template for the Alphabet sort task.
The dataset itself already contains instructions.

\begin{tcolorbox}[colback=gray!8!white, colframe=gray!40!black, boxrule=0.5pt, arc=3pt, left=8pt, right=8pt, top=6pt, bottom=6pt]
\begin{verbatim}
SYSTEM:

USER:
Sort the following names in alphabetical order:
[List of names]

ASSISTANT:
\end{verbatim}
\end{tcolorbox}

\vspace{0.1in}

\subsection{Additional details on Masks training}
The Qwen models have the output projection head tied to the embedding layer; we sample masks once for the embedding layer and reuse them during projection.

\section{Eigenspectrum Analysis Methodology}
\label{sec:eigenspectrum-methodology}

To compute the eigenspectrum of the gradient Fisher information matrix (Figure~\ref{fig:eigenspectrum} in the main text), we used the following procedure:

\begin{enumerate}
    \item \textbf{Gradient collection:} Using the Qwen2.5-0.5B model on the Alphabet Sort task, we ran 150 training steps and saved the complete gradient vector at each step.
    
    \item \textbf{Gradient matrix construction:} We flattened each gradient tensor into a 1D vector of dimension $d \approx 490{,}000{,}000$ (corresponding to the total number of model parameters). Stacking all 150 gradient vectors produced a matrix $G \in \mathbb{R}^{150 \times 490M}$.
    
    \item \textbf{Gram matrix computation:} Rather than computing the full Fisher matrix $F = G^\top G \in \mathbb{R}^{490M \times 490M}$ (which would be computationally infeasible), we computed the Gram matrix $GG^\top \in \mathbb{R}^{150 \times 150}$.
    
    \item \textbf{Eigendecomposition:} We performed eigenvalue decomposition on $GG^\top$ to obtain the eigenspectrum. The non-zero eigenvalues of $GG^\top$ are identical to those of $G^\top G$, allowing us to characterize the effective rank of the gradient space.
\end{enumerate}

This procedure reveals that the gradient updates lie in a low-dimensional subspace, as evidenced by the rapid eigenvalue decay shown in Figure~\ref{fig:eigenspectrum}. The top few eigenvalues capture most of the variance, supporting Assumption~5.1 (low effective rank) in our theoretical framework.

\section{Baselines}
\label{sec:structured_baselines}
We run structured sparsity baselines experiments (first and last layer) against full parameter finetune and random seed (seed = 0).

\begin{figure*}[!th]
  \centering
  \begin{subfigure}{0.49\textwidth}
    \centering
    \includegraphics[width=\linewidth]{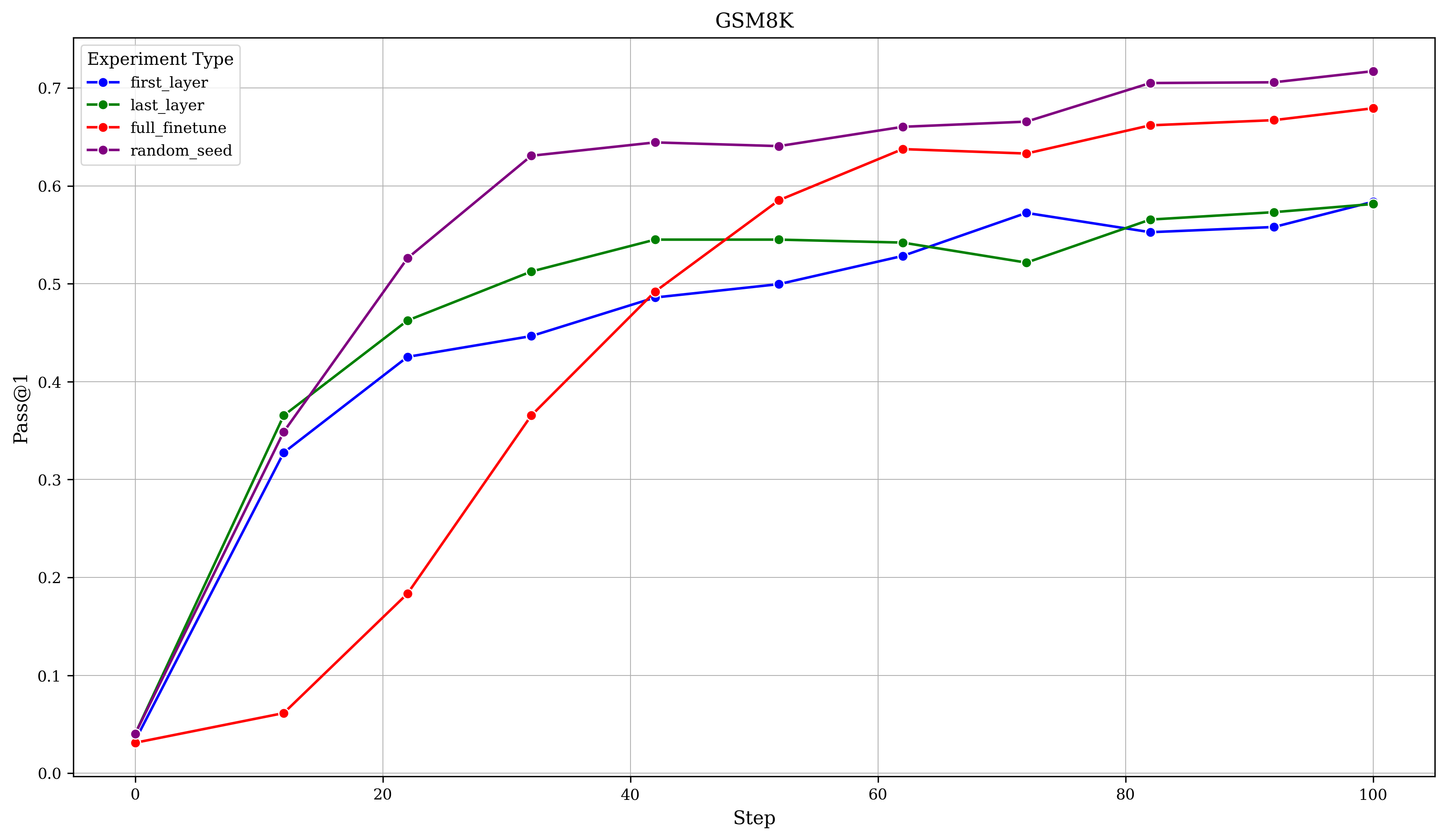}
  \end{subfigure}\hfill
  \begin{subfigure}{0.49\textwidth}
    \centering
    \includegraphics[width=\linewidth]{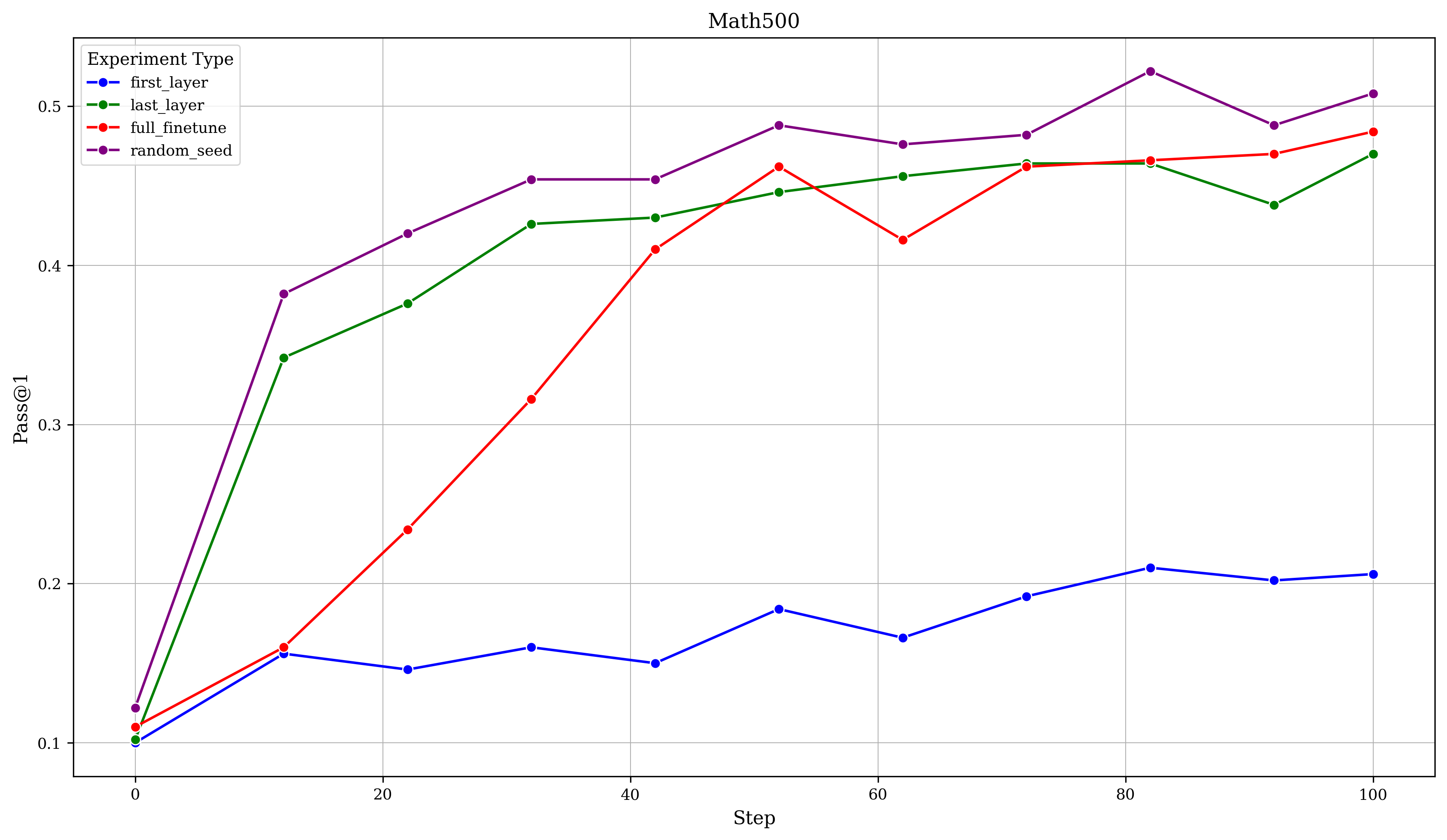}
  \end{subfigure}

  \vspace{0.8em}

  \caption{
    \textbf{Comparison of random sparse training, full parameter finetuning and structured sparsity training on Qwen-2.5-1.5B.} 
  }
  \label{fig:mth_15B_baselines}
\end{figure*}

\begin{figure*}[!h]
  \centering
  \begin{subfigure}{0.49\textwidth}
    \centering
    \includegraphics[width=\linewidth]{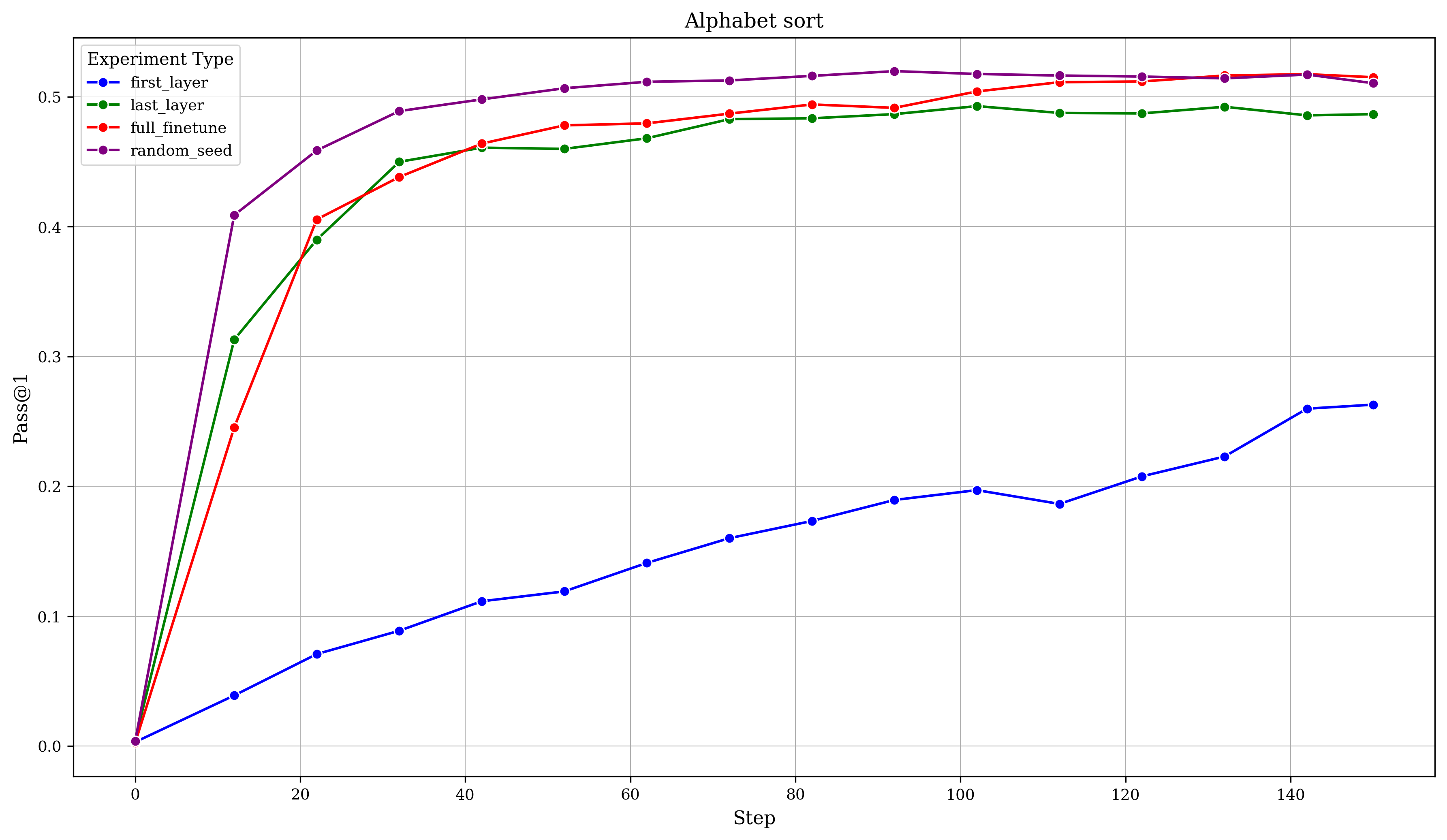}
  \end{subfigure}\hfill
  \begin{subfigure}{0.49\textwidth}
    \centering
    \includegraphics[width=\linewidth]{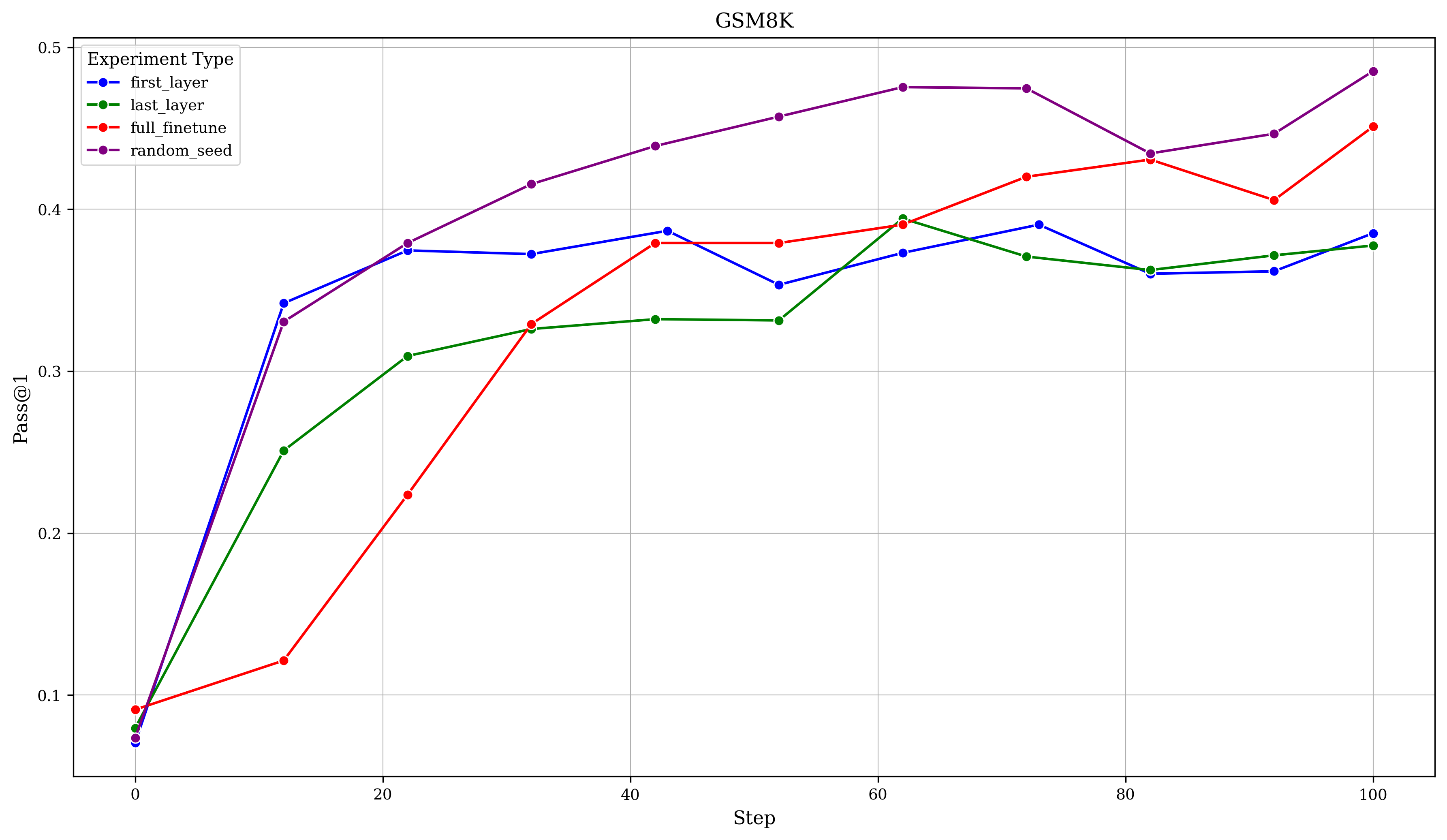}
  \end{subfigure}

  \vspace{0.8em}

  \caption{
    \textbf{Comparison of random sparse training, full parameter finetuning and structured sparsity training on Qwen-2.5-0.5B (Instruct / Base).} 
  }
  \label{fig:05B_baselines}
\end{figure*}

\newpage
\newpage

\section{Learning Rate Puzzle}
As noted by Xu \& Zhang, \yrcite{xu2024random} and evident from Table \ref{tab:sparsity_active_lr}, we also observe that at increasing sparsities, higher learning rates are needed for the performance to match full RLVR finetuning.

Figure \ref{fig:sparsity-lr-sweep-as} shows the learning rate sweep across multiple masks on Alphabet sort task, for Qwen2.5-0.5B-Instruct.

\begin{figure*}[h]
  \centering
    \includegraphics[width=\textwidth,height=0.3\textheight,keepaspectratio]
    {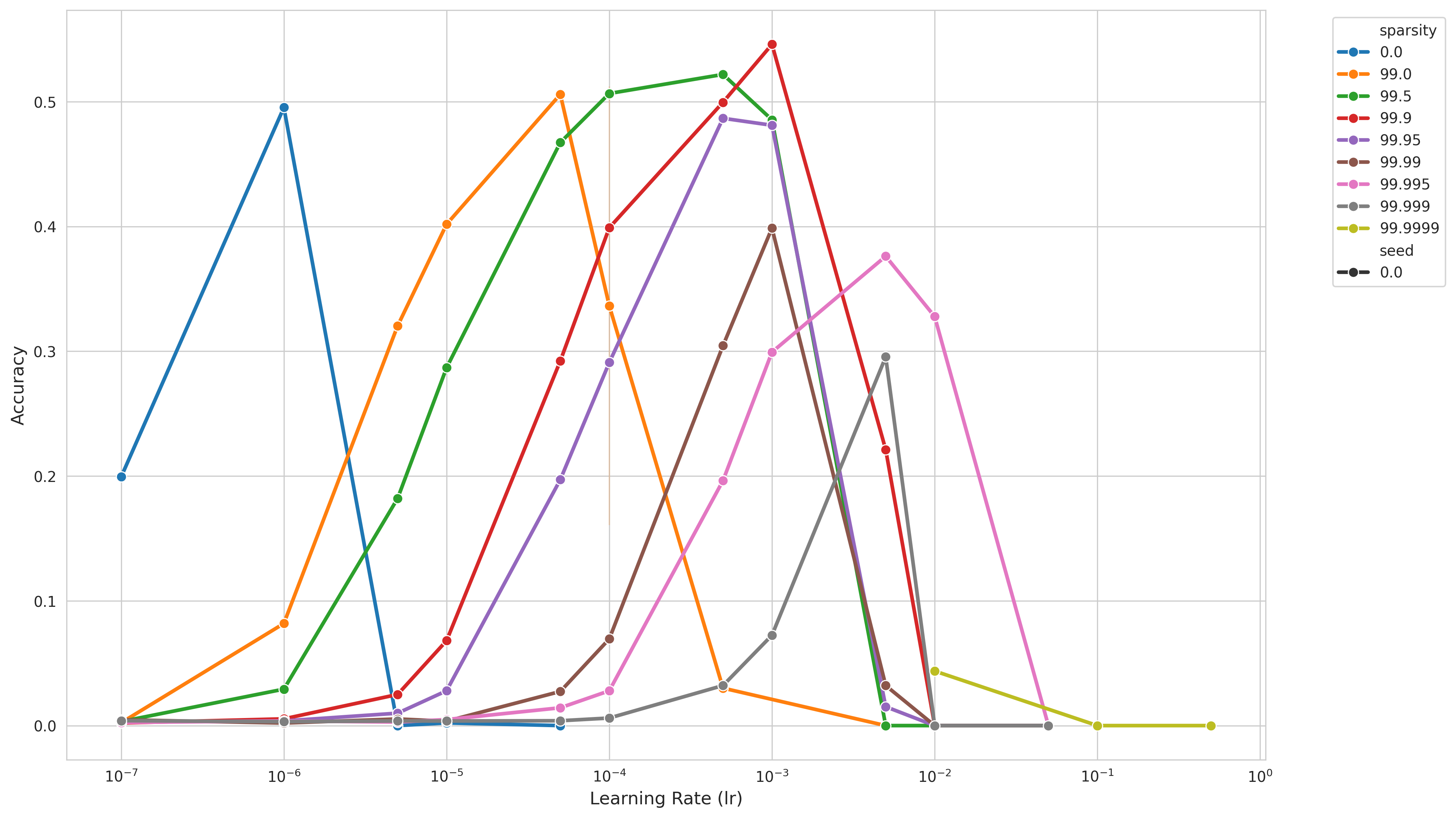}
  \caption{\textbf{Learning-rate sweep on Alphabet Sort for Qwen2.5-0.5B-Instruct.} Validation performance across sparsity levels and learning rates. All the masks for the various sparsities are seeded to 0. See the Appendix for the lr. sweep for the other seeded random masks.}
  \label{fig:sparsity-lr-sweep-as}
\end{figure*}

\section{Memory Savings Across Sparsities}
We update and track optimizer state for only the parameters being updated in a particular run. In the table below, we show the memory savings from tracking only the parameters being updated.

\begin{table}
\centering
\caption{Optimizer memory footprint during training (MiB). Full finetuning shown for reference.}
\label{tab:optimizer-memory}
\small
\begin{tabular}{lrrrrrrr}
\toprule
Model          & Full & 99\%   & 99.5\% & 99.9\% & 99.95\% & 99.99\% & 99.999\% \\
\midrule
Qwen2.5-1.5B   & 11776  & 235.8    & 118    & 23.8     & 12      & 2.5     & 0.39      \\
Qwen2.5-0.5B   & 3769.5  & 75.6     & 37.86     & 7.67    & 3.9     & 0.9     & 0.2      \\
\bottomrule
\end{tabular}
\end{table}

\newpage
\newpage 

\section{Theoretical Proofs}
\label{app:proofs}

This appendix provides complete proofs for the theoretical results stated in Section~\ref{sec:theory}.

\subsection{Detailed Assumptions and Justifications}
\label{app:assumptions}

We restate the assumptions from the main text with full justification.

\begin{assumption}[Low Effective Rank]
\label{app:ass1}
There exists a small integer $r \ll d$ and a small constant $\epsilon > 0$ such that:
\[
\frac{\sum_{i=1}^r \lambda_i}{\sum_{i=1}^d \lambda_i} \geq 1 - \epsilon.
\]
\end{assumption}

\textit{Justification.} This assumption states that the top $r$ eigenvectors capture nearly all the ``energy'' of the Fisher matrix. The empirical eigenvalue spectrum (Figure~\ref{fig:eigenspectrum}), showing rapid decay after the first few components, directly supports this assumption. Details of the eigenspectrum computation are provided in Appendix~\ref{sec:eigenspectrum-methodology}.

\begin{assumption}[Delocalization of Eigenvectors]
\label{app:ass2}
There exists a constant $\mu > 0$ such that for each eigenvector $v_i$:
\[
\|v_i\|_\infty \leq \frac{\mu}{\sqrt{d}}.
\]
\end{assumption}

\textit{Justification.} This condition states that no single parameter dominates an eigenvector; the eigenvector's mass is spread across many coordinates. This is a common property in large random matrices and is empirically plausible for well-trained neural networks, where gradient information is typically distributed across many parameters rather than concentrated in a few.

\begin{assumption}[Small-Step Regime]
\label{app:ass3}
The per-step update $\Delta$ satisfies $\|\Delta\| = O(\sqrt{K})$, where $K$ is the KL bound.
\end{assumption}

\textit{Justification.} This ensures that second-order Taylor expansions are accurate and higher-order terms are negligible. In practice, the clipping mechanism in PPO/GRPO and the on-policy sampling procedure naturally enforce small per-step policy changes.

\subsection{Proof of Proposition~\ref{prop:lowdim} (Low-Dimensional Policy Sensitivity)}
\label{app:proof1}

\begin{proposition}[Restated]
Under Assumptions~\ref{app:ass1} and~\ref{app:ass3}, for any update $\Delta$ satisfying the per-step KL constraint $D_{\mathrm{KL}}(\pi_{\theta+\Delta} \| \pi_\theta) \leq K$, the second-order change in the policy depends only on the projection of $\Delta$ onto the subspace $U = \mathrm{span}\{v_1, \ldots, v_r\}$. Components orthogonal to $U$ have negligible impact on the policy.
\end{proposition}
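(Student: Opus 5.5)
The plan is to Taylor-expand the per-step KL divergence to second order around $\theta$ and use Assumption~\ref{app:ass3} to control the remainder. Since $D_{\mathrm{KL}}(\pi_\theta\|\pi_\theta)=0$ and its gradient in $\Delta$ vanishes at $\Delta=0$ (the score has zero mean), we get
\[
D_{\mathrm{KL}}(\pi_{\theta+\Delta}\|\pi_\theta) = \tfrac12 \Delta^\top F(\theta)\Delta + R_3(\Delta), \qquad |R_3(\Delta)| \le C\|\Delta\|^3 = O(K^{3/2}).
\]
Here $C$ is a bound on third derivatives of the log-likelihood, which we take as a smoothness hypothesis implicit in the small-step regime. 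Hence, to leading order, the KL constraint is the Fisher ellipsoid $\tfrac12\Delta^\top F\Delta \le K + O(K^{3/2})$. The same expansion controls any policy functional: the first-order change in $\log\pi_\theta(y|x)$ is $\langle \nabla\log\pi_\theta(y|x),\Delta\rangle$, and its mean square over $y\sim\pi_\theta$ is exactly $\Delta^\top F\Delta$.

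Next, decompose $\Delta = \Delta_\parallel + \Delta_\perp$, with $\Delta_\parallel = P_U\Delta$. Because $U$ is spanned by eigenvectors of $F$, the quadratic form splits with no cross term:
\[
\Delta^\top F\Delta = \sum_{i\le r}\lambda_i\langle v_i,\Delta\rangle^2 + \sum_{i>r}\lambda_i\langle v_i,\Delta\rangle^2 .
\]
The policy change attributable to $\Delta_\perp$ is measured by $\Delta_\perp^\top F\Delta_\perp$, which equals $\mathbb{E}_y[\langle\nabla\log\pi,\Delta_\perp\rangle^2]$. Bounding this is the crux.

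Assumption~\ref{app:ass1} alone gives $\sum_{i>r}\lambda_i \le \epsilon\,\mathrm{tr}F$. From this we obtain $\Delta_\perp^\top F\Delta_\perp \le \lambda_{r+1}\|\Delta_\perp\|^2 \le \epsilon\,\mathrm{tr}(F)\,\|\Delta\|^2$. Combined with $\|\Delta\|^2 = O(K)$ from Assumption~\ref{app:ass3}, this yields $\Delta_\perp^\top F\Delta_\perp = O(\epsilon\, \mathrm{tr}(F)\, K)$. So, up to the cubic remainder, the KL and the change of every log-probability (in mean square) are determined by $\Delta_\parallel$:
\[
D_{\mathrm{KL}}(\pi_{\theta+\Delta}\|\pi_\theta) = \tfrac12\Delta_\parallel^\top F\Delta_\parallel + O(\epsilon K) + O(K^{3/2}).
\]

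The main obstacle is making ``negligible'' honest. The tail bound is only relative to $\mathrm{tr}(F)\|\Delta\|^2$, not relative to the KL budget $K$ itself: a $\Delta$ concentrated in directions with tiny $\lambda_i$ may have a large Euclidean norm. I would handle this in two steps. First, state the result as the relative bound $\lambda_{r+1}\|\Delta_\perp\|^2 \le \epsilon\,\mathrm{tr}(F)\|\Delta\|^2$. Second, note that the norm cap of Assumption~\ref{app:ass3}, with a constant normalized so that $\mathrm{tr}(F)\|\Delta\|^2 = O(K)$, turns it into an $O(\epsilon K)$ statement. I would also remark that the term ``second-order'' in the restated proposition is exactly what keeps the claim local: the cubic remainder must also be checked to be dominated, which requires $K^{1/2}\ll\epsilon$ or simply treating both as small.
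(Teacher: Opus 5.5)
Your argument has the same skeleton as the paper's proof: the quadratic Fisher form of the KL, the eigen-splitting of $\Delta$ with no cross term, a tail bound through $\lambda_{r+1}\|\Delta_\perp\|^2$, and the identity $\mathbb{E}_{y\sim\pi_\theta}[\langle\nabla\log\pi_\theta(y|x),\Delta_\perp\rangle^2]=\Delta_\perp^\top F\Delta_\perp$ to pass from the KL to log-probabilities. You handle the crux differently, however, and your version is the sound one.

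The paper asserts $\lambda_{r+1}\le \Lambda_{\mathrm{tail}}/(d-r)$, which is backwards: the largest tail eigenvalue is at least the tail average. It combines this with $\mathrm{Tr}(F)=O(d)$ to get $\lambda_{r+1}=O(\epsilon)$, and then bounds $\|\Delta_\perp\|^2$ by $O(K/\lambda_{\min})$ using the KL constraint. Since $\lambda_{\min}\le\lambda_{r+1}$, the resulting product $\lambda_{r+1}\cdot 2K/\lambda_{\min}$ is never below $2K$. That route therefore cannot improve on the trivial bound $\Delta_\perp^\top F\Delta_\perp\le 2K$. The $\lambda_{\min}$-bound is also void when $F$ is rank-deficient ($\lambda_{\min}=0$), as is the 150-gradient Fisher estimate in the paper's eigenspectrum analysis.

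You instead use the correct inequality $\lambda_{r+1}\le\sum_{i>r}\lambda_i\le\epsilon\,\mathrm{tr}(F)$. You also take the Euclidean control of $\Delta_\perp$ from Assumption~\ref{app:ass3} rather than from the KL constraint. Together these give a genuine inequality, $\Delta_\perp^\top F\Delta_\perp\le\epsilon\,\mathrm{tr}(F)\|\Delta\|^2$.

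The cost, as you rightly flag, is that negligibility relative to $K$ needs the quantitative condition $\mathrm{tr}(F)\|\Delta\|^2=O(K)$ with a moderate constant. If $\mathrm{tr}(F)=\Theta(d)$, as the paper suggests, this means $\|\Delta\|^2=O(K/d)$. You should therefore present it as an explicit strengthening of Assumption~\ref{app:ass3}, not as a choice of normalization.

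One minor point: for both error terms to be negligible against the budget $K$, you only need $\epsilon\ll 1$ and $K^{1/2}\ll 1$. The condition $K^{1/2}\ll\epsilon$ is unnecessary.
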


\begin{proof}
We proceed in five steps.

\paragraph{Step 1: KL Constraint in Quadratic Form.}
Using a second-order Taylor expansion and the definition of the Fisher matrix, the KL divergence can be approximated as:
\[
D_{\mathrm{KL}}(\pi_{\theta+\Delta} \| \pi_\theta) = \frac{1}{2} \Delta^\top F(\theta) \Delta + O(\|\Delta\|^3).
\]
Under the small-step regime (Assumption~\ref{app:ass3}), the cubic term is negligible, so the constraint is essentially:
\begin{equation}
\Delta^\top F(\theta) \Delta \leq 2K. \tag{$\star$}
\end{equation}

\paragraph{Step 2: Decomposition of the Update.}
Decompose $\Delta$ into two orthogonal components:
\[
\Delta = \Delta_\parallel + \Delta_\perp,
\]
where $\Delta_\parallel \in U$ and $\Delta_\perp \in U^\perp$ (the orthogonal complement, spanned by $v_{r+1}, \ldots, v_d$). Substituting into ($\star$):
\[
\Delta^\top F(\theta) \Delta = \Delta_\parallel^\top F(\theta) \Delta_\parallel + \Delta_\perp^\top F(\theta) \Delta_\perp \leq 2K.
\]

\paragraph{Step 3: Bounding the Contribution of $\Delta_\perp$.}
Since $\Delta_\perp$ lies in the span of the tail eigenvectors:
\[
\Delta_\perp^\top F(\theta) \Delta_\perp = \sum_{i=r+1}^d \lambda_i \langle \Delta_\perp, v_i \rangle^2 \leq \lambda_{r+1} \|\Delta_\perp\|^2,
\]
where $\lambda_{r+1}$ is the largest eigenvalue in the tail. By Assumption~\ref{app:ass1}, $\lambda_{r+1}$ is very small relative to the total variance. More precisely, let $\Lambda_{\mathrm{tail}} = \sum_{i=r+1}^d \lambda_i$. Then:
\[
\lambda_{r+1} \leq \frac{\Lambda_{\mathrm{tail}}}{d - r} \leq \frac{\epsilon \cdot \mathrm{Tr}(F(\theta))}{d - r}.
\]
Since $\mathrm{Tr}(F(\theta))$ is typically $O(d)$ in neural networks, $\lambda_{r+1} = O(\epsilon)$. Therefore, even if $\|\Delta_\perp\|^2$ is as large as $O(K / \lambda_{\min})$ (where $\lambda_{\min}$ is the smallest eigenvalue), the product $\lambda_{r+1} \|\Delta_\perp\|^2$ remains $O(\epsilon K / \lambda_{\min})$. Given that $\epsilon$ is small and $\lambda_{\min}$ is not extremely small in practice, this term is negligible compared to the KL budget $K$.

\paragraph{Step 4: Policy Change Depends Primarily on $\Delta_\parallel$.}
Consider the change in log-probability for a specific output $y$:
\[
\log \pi_{\theta+\Delta}(y|x) - \log \pi_\theta(y|x) = \Delta^\top g(y) + \frac{1}{2} \Delta^\top H(y) \Delta + O(\|\Delta\|^3),
\]
where $g(y) = \nabla_\theta \log \pi_\theta(y|x)$ and $H(y) = \nabla_\theta^2 \log \pi_\theta(y|x)$. The expected square of the linear term is $\Delta^\top F(\theta) \Delta$, which we have already bounded. The linear term decomposes as:
\[
\Delta^\top g(y) = \Delta_\parallel^\top g(y) + \Delta_\perp^\top g(y).
\]
The variance of the second term is:
\[
\mathbb{E}_{y \sim \pi_\theta} \bigl[ (\Delta_\perp^\top g(y))^2 \bigr] = \Delta_\perp^\top F(\theta) \Delta_\perp,
\]
which is negligible as argued above. Moreover, since $g(y)$ lies in the span of the Fisher eigenvectors (by definition of $F(\theta)$), the component $\Delta_\perp^\top g(y)$ is only excited by tail eigenvectors, which have small eigenvalues and hence small typical magnitudes. Therefore, the change in log-probability---and thus the policy itself---is dominated by $\Delta_\parallel$.

\paragraph{Step 5: Conclusion.}
To second order, the policy update depends only on $\Delta_\parallel$. The orthogonal component $\Delta_\perp$ neither significantly affects the KL divergence nor the policy output. This establishes that the policy-relevant subspace is effectively low-dimensional.
\end{proof}

\subsection{Proof of Proposition~\ref{prop:random} (Sufficiency of Random Masks)}
\label{app:proof2}

\begin{proposition}[Restated]
Let $S \subset \{1, \ldots, d\}$ be a random subset of indices of size $k$, chosen uniformly. Under Assumptions~\ref{app:ass1} and~\ref{app:ass2}, if $k > r$, then with high probability there exists an update $\Delta_S$ supported on $S$ (i.e., $\Delta_{S,i} = 0$ for $i \notin S$) such that:
\[
\| \Delta_\parallel - \Delta_S \|_{F} \leq \eta,
\]
where $\|\cdot\|_F$ denotes the Fisher norm $\|u\|_F = \sqrt{u^\top F(\theta) u}$, and $\eta$ is a small constant that decreases as $k$ increases. Consequently, optimizing only over parameters in $S$ can achieve policy improvement equivalent to full-parameter optimization within the KL-reachable region.
\end{proposition}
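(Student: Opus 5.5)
The plan is to reduce the statement to a well-conditioning question about a random row-submatrix of the top-$r$ eigenbasis, and then handle the tail using Assumption~\ref{app:ass1}. Let $V_r = [v_1,\ldots,v_r] \in \mathbb{R}^{d\times r}$ and let $P_S \in \mathbb{R}^{d\times d}$ be the coordinate projection onto $S$. Write the target as $\Delta_\parallel = V_r a$ with $a\in\mathbb{R}^r$.

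\textbf{Choice of candidate.} I will look for $\Delta_S$ of the form $\Delta_S = P_S V_r b$, i.e.\ the restriction to $S$ of some vector in $U$, rescaled. Since $\mathbb{E}[P_S] = \tfrac{k}{d} I$, the natural choice is $b = \tfrac{d}{k}\,(V_r^\top P_S V_r)^{-1}\cdots$. The cleanest version is the least-squares choice: pick $\Delta_S$ supported on $S$ so that $V_r^\top \Delta_S = a$. Concretely,
\[
\Delta_S = P_S V_r (V_r^\top P_S V_r)^{-1} a .
\]
This is well defined as soon as $M_S := V_r^\top P_S V_r$ is invertible. With this choice, the projection of $\Delta_S$ onto $U$ equals $\Delta_\parallel$ exactly. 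The residual $\Delta_\parallel - \Delta_S$ therefore lies entirely in $U^\perp$.

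\textbf{Main step: conditioning of $M_S$.} I expect this to be the main obstacle. We have $M_S = \sum_{i\in S} u_i u_i^\top$, where $u_i\in\mathbb{R}^r$ is the $i$-th row of $V_r$. By Assumption~\ref{app:ass2}, $\|u_i\|^2 \le \mu^2 r/d$, and summing over all $i$ gives $\sum_i u_i u_i^\top = I_r$. Sampling without replacement is dominated by sampling with replacement, so I will apply the matrix Chernoff bound (Tropp) to get
\[
\Pr\!\Bigl[\lambda_{\min}\bigl(\tfrac{d}{k}M_S\bigr) \le 1-\delta\Bigr] \le r\,\exp\!\Bigl(-\tfrac{\delta^2 k}{2\mu^2 r}\Bigr).
\]
Hence for $k \gtrsim \mu^2 r\log(r/\rho)$ we have, with probability at least $1-\rho$, $\lambda_{\min}(M_S) \ge \tfrac{k}{2d}$. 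This requires $k$ to exceed $r$ by a coherence$\cdot\log$ factor rather than merely $k>r$. I will state the result that way and treat "$k>r$" in the proposition as shorthand.

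\textbf{Bounding the residual in the Fisher norm.} The residual $w = \Delta_\parallel - \Delta_S$ lies in $U^\perp$. By the argument of Step~3 in the proof of Proposition~\ref{prop:lowdim}, $\|w\|_F^2 \le \lambda_{r+1}\|w\|^2$. For the Euclidean norm, I will use $\|\Delta_S\|^2 = a^\top M_S^{-1} a \le \tfrac{2d}{k}\|a\|^2$, which gives $\|w\|^2 \le (1+\tfrac{2d}{k})\|a\|^2$. The KL constraint together with Assumption~\ref{app:ass3} gives $\|a\|^2 = O(K)$ (or $\le 2K/\lambda_r$ via $(\star)$). Combining,
\[
\eta^2 \le \lambda_{r+1}\bigl(1+\tfrac{2d}{k}\bigr)\,O(K),
\]
with $\lambda_{r+1} \le \epsilon\,\mathrm{Tr}F/(d-r)$ by Assumption~\ref{app:ass1}. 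This bound is small when $\epsilon$ is small and decreases in $k$, which is the claimed behaviour of $\eta$.

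\textbf{Concluding the optimization claim.} Because $V_r^\top\Delta_S = a$, the linearized policy change along the top-$r$ directions is matched exactly. By Proposition~\ref{prop:lowdim}, the remaining discrepancy enters only through the tail term just bounded, so optimizing over $S$ reaches the same KL-feasible policy change up to $O(\eta)$.

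A caveat remains. The factor $d/k$ can be large, so smallness of $\eta$ genuinely relies on $\epsilon\,\mathrm{Tr}F/(d-r)$ being small. I would flag this explicitly and make the required condition a stated hypothesis rather than hide it.
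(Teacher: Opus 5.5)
Your construction and your concentration step are sound. Taking $\Delta_S = P_S V_r M_S^{-1}a$ puts the whole residual $w$ in $U^\perp$. The matrix $M_S = V_r^\top P_S V_r$ is exactly the object Lemma~\ref{lem:concentration} controls; the paper instead solves the least-squares problem in the Fisher norm and asserts concentration of $\frac{d}{k}\tilde V_r^\top F \tilde V_r$ around $V_r^\top F V_r$, which that lemma does not give. Your requirement $k \gtrsim \mu^2 r \log(r/\rho)$ is the honest version of ``$k>r$''.

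The gap is in the residual bound. The inequality $\lambda_{r+1} \le \epsilon\,\mathrm{Tr}F/(d-r)$ is backwards. $\lambda_{r+1}$ is the \emph{largest} tail eigenvalue, so $\lambda_{r+1} \ge \Lambda_{\mathrm{tail}}/(d-r)$. Assumption~\ref{app:ass1} gives only $\lambda_{r+1} \le \Lambda_{\mathrm{tail}} \le \epsilon\,\mathrm{Tr}F$. (You inherited the reversed inequality from Step~3 of the proof of Proposition~\ref{prop:lowdim}.)

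Without that inequality, the factor $d/k$ is no longer cancelled. Relative to the KL budget, your estimate reads $\eta^2/K \lesssim (\lambda_{r+1}/\lambda_r)(2d/k)$. You would therefore need a spectral gap $\lambda_{r+1}/\lambda_r \ll k/d$, below $10^{-2}$ already at 1\% density. Assumption~\ref{app:ass1} does not provide this: a smoothly decaying spectrum can have tiny tail mass and still have $\lambda_{r+1}\approx\lambda_r$. The real culprit is the operator-norm step $\|w\|_F^2 \le \lambda_{r+1}\|w\|^2$. Since $\|w\|^2 \approx (d/k-1)\|a\|^2$ is necessarily large, this step wrongly charges all of it to the single worst tail direction.

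The repair is to average over the tail, using the randomness of $S$ and delocalization a second time.
\begin{itemize}
\item Set $b = M_S^{-1}a$ and $F_\perp = F(I - V_rV_r^\top)$, which is positive semidefinite with trace $\Lambda_{\mathrm{tail}}$. Then $\|w\|_F^2 = b^\top V_r^\top P_S F_\perp P_S V_r b \le \mathrm{tr}(V_r^\top P_S F_\perp P_S V_r)\,\|b\|^2$.
\item For a uniform $k$-subset, $\mathbb{E}[P_S F_\perp P_S] = \alpha F_\perp + \beta\,\mathrm{Diag}(F_\perp)$ for some $\alpha\ge 0$ and $\beta \le k/d$. Since $V_r^\top F_\perp V_r = 0$, the expected trace is at most $\frac{k}{d}\sum_i (F_\perp)_{ii}\|u_i\|^2 \le \frac{k\mu^2 r}{d^2}\,\epsilon\,\mathrm{Tr}F$.
\item Markov's inequality, together with $\|b\| \le \frac{2d}{k}\|a\|$ on your conditioning event, gives $\eta^2 \le \frac{4\mu^2 r}{\rho k}\,\epsilon\,\mathrm{Tr}F\,\|a\|^2$. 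This holds with probability at least $1-\rho$ minus the Chernoff failure probability.
\end{itemize}
This bound is proportional to $\epsilon$ and decays like $1/k$, as the statement wants. The paper's own Step~4 simply asserts $\kappa \to 0$, so it does not supply this estimate either.
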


\begin{proof}
We proceed in six steps.

\paragraph{Step 1: Setup and Notation.}
Let $V_r = [v_1, \ldots, v_r] \in \mathbb{R}^{d \times r}$ be the matrix whose columns are the top $r$ eigenvectors. Any vector in $U$ can be written as $V_r c$ for some coefficient vector $c \in \mathbb{R}^r$. Let $P_S$ be the projection operator that zeros out coordinates not in $S$: $(P_S u)_i = u_i$ if $i \in S$, and $0$ otherwise.

\paragraph{Step 2: Goal.}
We want to approximate a given $\Delta_\parallel = V_r c$ by a vector $\Delta_S$ supported on $S$. Equivalently, we want to find coefficients $c' \in \mathbb{R}^r$ such that $\Delta_S = P_S(V_r c')$ is close to $\Delta_\parallel$ in Fisher norm.

\paragraph{Step 3: Delocalization and Random Masks.}
By Assumption~\ref{app:ass2}, each eigenvector $v_i$ has bounded infinity norm. This delocalization property implies that when we sample a random subset $S$ of coordinates, the restricted vectors $\tilde{v}_i = P_S(v_i)$ are likely to preserve the geometric structure of the original subspace.

Formally, consider the matrix $\tilde{V}_r = P_S(V_r) \in \mathbb{R}^{d \times r}$ (which has zeros in rows outside $S$). The product $\tilde{V}_r^\top F(\theta) \tilde{V}_r$ measures how well the restricted eigenvectors capture the Fisher metric on the subspace. Because the eigenvectors are delocalized, each row of $V_r$ has small norm. A standard concentration argument (see Lemma~\ref{lem:concentration} below) shows that with high probability,
\[
\left\| \frac{d}{k} \tilde{V}_r^\top F(\theta) \tilde{V}_r - V_r^\top F(\theta) V_r \right\|_2 \leq \delta,
\]
where $\delta$ decreases with $k$. Since $V_r^\top F(\theta) V_r = \mathrm{diag}(\lambda_1, \ldots, \lambda_r)$ is diagonal with large entries, the restricted Gram matrix is also well-conditioned when $k$ is sufficiently larger than $r$.

\paragraph{Step 4: Existence of a Good Approximation.}
Because the restricted Gram matrix is well-conditioned, the linear map $c' \mapsto P_S(V_r c')$ is injective on $\mathbb{R}^r$. Thus, for any desired $\Delta_\parallel = V_r c$, we can solve the least-squares problem:
\[
\min_{c' \in \mathbb{R}^r} \| V_r c - P_S(V_r c') \|_F.
\]
The solution satisfies:
\[
\| V_r c - P_S(V_r c') \|_F \leq \kappa \cdot \| V_r c \|_F,
\]
where $\kappa$ depends on the condition number of the restricted Gram matrix. As $k$ increases, $\kappa \to 0$. Setting $\Delta_S = P_S(V_r c')$ yields the required approximation.

\paragraph{Step 5: Connection to Policy Improvement.}
Since the policy change depends continuously on the update (as shown in Proposition~\ref{prop:lowdim}), and since the Fisher norm dominates the change in log-probabilities, a small error $\eta$ in Fisher norm translates to a small error in policy improvement. Therefore, optimizing over the mask $S$ can achieve essentially the same policy improvement as full-parameter optimization, provided $k > r$.

\paragraph{Step 6: Threshold Effect.}
The quality of approximation undergoes a phase transition: when $k < r$, the restricted Gram matrix becomes singular, and approximation fails. When $k > r$, the error decreases as $k$ increases. This explains the empirical observation that random masks work well above a certain sparsity threshold.
\end{proof}

\subsection{Technical Lemmas}
\label{app:lemmas}

\begin{lemma}[Concentration of Restricted Gram Matrix]
\label{lem:concentration}
Let $V_r \in \mathbb{R}^{d \times r}$ have orthonormal columns with $\|v_i\|_\infty \leq \mu/\sqrt{d}$. Let $S$ be a random subset of size $k$. Then with probability at least $1 - \delta$,
\[
\left\| \frac{d}{k} \tilde{V}_r^\top \tilde{V}_r - I_r \right\|_2 \leq C \mu^2 r \sqrt{\frac{\log(1/\delta)}{k}},
\]
where $\tilde{V}_r = P_S(V_r)$ and $C$ is an absolute constant.
\end{lemma}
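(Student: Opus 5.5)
The plan is to write the restricted Gram matrix as a sum over the sampled rows and apply a matrix concentration inequality for sums of bounded positive semidefinite random matrices. Let $u_1,\dots,u_d\in\mathbb{R}^r$ denote the rows of $V_r$. Since $P_S$ zeros out the rows outside $S$,
\[
\frac{d}{k}\tilde V_r^\top \tilde V_r=\frac{d}{k}\sum_{j\in S}u_ju_j^\top .
\]
The relevant facts are:
\begin{itemize}
\item Each index lies in a uniform $k$-subset with probability $k/d$, so the expectation is $\sum_j u_ju_j^\top=V_r^\top V_r=I_r$.
\item Assumption~\ref{app:ass2} gives $\|u_j\|_2^2=\sum_{i\le r}v_{i,j}^2\le \mu^2 r/d$.
\item Since $\sum_j\|u_j\|^2=r$, we also get $\mu\ge 1$, hence $\mu^2 r\ge 1$. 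This will be used to absorb logarithmic and lower-order terms.
\end{itemize}

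To handle sampling without replacement, I will invoke the standard comparison (Hoeffding's argument, in its matrix form due to Gross--Nesme and Tropp). It says that the trace-mgf of a sum sampled without replacement is dominated by that of i.i.d.\ sampling with replacement. It therefore suffices to bound
\[
\sum_{t=1}^k X_t,\qquad X_t=\frac{d}{k}u_{J_t}u_{J_t}^\top-\frac1k I_r,
\]
with $J_t$ i.i.d.\ uniform on $\{1,\dots,d\}$. These summands are centered and satisfy:
\begin{itemize}
\item a uniform bound $\|X_t\|\le L:=\mu^2 r/k$, using $\mu^2 r\ge1$;
\item a variance bound obtained from $\mathbb{E}(\tfrac dk u u^\top)^2\preceq \tfrac{d^2}{k^2}\max_j\|u_j\|^2\,\mathbb{E}\,uu^\top=\tfrac{\mu^2 r}{k^2}I$, so that $\sigma^2:=\|\sum_t\mathbb{E}X_t^2\|\le \mu^2 r/k$.
\end{itemize}

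Matrix Bernstein then gives, with probability at least $1-\delta$,
\[
\Bigl\|\frac dk\tilde V_r^\top\tilde V_r-I_r\Bigr\|_2\le \sqrt{\frac{2\mu^2 r\log(2r/\delta)}{k}}+\frac{2\mu^2 r\log(2r/\delta)}{3k}.
\]
This is close to, but not literally, the stated bound.

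The main obstacle is the bookkeeping needed to reach the stated form $C\mu^2 r\sqrt{\log(1/\delta)/k}$, which has no $\log r$ term. I would proceed in three steps.
\begin{enumerate}
\item Assume $\delta\le 1/2$, the only meaningful regime, so that $\log(1/\delta)\ge \log 2$. Split $\log(2r/\delta)\le \log(2r)+\log(1/\delta)\le c\,r\log(1/\delta)$.
\item Bound the square-root term by $\sqrt{\mu^2 r\cdot r\log(1/\delta)/k}\le \mu^2 r\sqrt{\log(1/\delta)/k}$, using $\mu\ge1$. The slack of a factor $\sqrt{\mu^2 r}$ in the stated bound is exactly what absorbs the dimensional log.
\item Treat the linear term in two cases. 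If $\log(1/\delta)\le k$, it is dominated by the square-root term. If $\log(1/\delta)>k$, the right-hand side is at least $C\mu^2 r$. Meanwhile the left-hand side is deterministically at most $\frac dk\cdot k\cdot\frac{\mu^2 r}{d}+1\le 2\mu^2 r$, so the claim holds trivially for $C\ge 2$.
\end{enumerate}
Combining the cases yields the lemma with an absolute constant $C$. The point to double-check is that the without-replacement domination applies to the trace-mgf used in the Bernstein proof, which is precisely the setting of Gross--Nesme.
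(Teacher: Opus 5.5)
\textbf{There is a gap in step 3 of your bookkeeping.} Your route is the paper's route: matrix Bernstein on the rank-one terms $\frac dk u_ju_j^\top$. Up through the Bernstein inequality you are more careful than the paper's sketch. You handle sampling without replacement by the Gross--Nesme comparison, rather than treating the indicators $\mathbf 1_{j\in S}$ as independent. You also use the correct row bound $\|u_j\|_2^2\le \mu^2 r/d$, so $L=\mu^2 r/k$, whereas the paper writes $\|X_j\|_2\le \mu^2/k$.

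The problem is your claim that the linear term is dominated whenever $\log(1/\delta)\le k$. That claim is false. Split the linear term:
\[
\frac{2\mu^2 r\log(2r/\delta)}{3k}=\frac{2\mu^2 r\log(2r)}{3k}+\frac{2\mu^2 r\log(1/\delta)}{3k}.
\]
\begin{itemize}
\item The second piece is fine: it is at most $\tfrac23\mu^2 r\sqrt{\log(1/\delta)/k}$ when $\log(1/\delta)\le k$.
\item The first piece is at most $C\mu^2 r\sqrt{\log(1/\delta)/k}$ only if $k\gtrsim \log^2(2r)/(C^2\log(1/\delta))$.
\end{itemize}
The $\sqrt{\mu^2 r}$ slack you used in step 2 exists only relative to the square-root term. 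The linear term already carries the full prefactor $\mu^2 r$, so nothing is left to absorb $\log(2r)$. Inserting step 1's bound $\log(2r/\delta)\le c\,r\log(1/\delta)$ makes it worse, of order $\mu^2 r^2\log(1/\delta)/k$.

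Your trivial case covers only $k\lesssim C^2\log(1/\delta)$. So the window $C^2\log(1/\delta)\lesssim k\lesssim \log^2(2r)/(C^2\log(1/\delta))$ is not covered. It is nonempty as soon as $\log(2r)\gtrsim C^2\log(1/\delta)$, so no absolute $C$ works for all $r$. The paper's sketch escapes this only through its bound $\mu^2/k$, which drops the factor $r$ that the stated column-wise assumption forces.

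\textbf{How to close it.} You need a tail bound that does not pay $\log r$ at large deviations. In the uncovered window, use the matrix Chernoff upper tail for the PSD summands $Y_t=\frac dk u_{J_t}u_{J_t}^\top$. These satisfy $\lambda_{\max}(\mathbb{E}\sum_t Y_t)=1$ and $\|Y_t\|_2\le \mu^2 r/k$, and the without-replacement comparison transfers to this bound as well:
\[
\Pr\Bigl\{\lambda_{\max}\Bigl(\sum_t Y_t\Bigr)\ge s\Bigr\}\le r\Bigl(\frac{e}{s}\Bigr)^{sk/(\mu^2 r)}.
\]
Take $s=1+C\mu^2 r\sqrt{\log(1/\delta)/k}$. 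Then:
\begin{itemize}
\item The exponent is at least $C\sqrt{k\log(1/\delta)}\gtrsim C^2\log(1/\delta)$.
\item Since $k\lesssim \log^2(2r)$ in the window, $e/s\lesssim \log(2r)/(C^2 r)$, which is polynomially small in $r$.
\end{itemize}
So the prefactor $r$ is beaten, and the failure probability is at most $\delta$ once $C$ is large.

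The lower side is trivial in this window: $1-\lambda_{\min}\le 1\le C\mu^2 r\sqrt{\log(1/\delta)/k}$, because there $k\le r^2$. Adding this third case closes your argument. Alternatively, run matrix Chernoff for both tails from the start.
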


\begin{proof}[Proof Sketch]
This follows from matrix Bernstein inequalities applied to the sum of independent random matrices $X_j = \frac{d}{k} \mathbf{1}_{j \in S} (V_r)_{j:}^\top (V_r)_{j:}$, where $(V_r)_{j:}$ is the $j$-th row of $V_r$. The delocalization assumption ensures each term has bounded norm $\|X_j\|_2 \leq \frac{d}{k} \cdot \frac{\mu^2}{d} = \frac{\mu^2}{k}$. Applying matrix Bernstein with variance proxy $\sigma^2 = O(r/k)$ yields the stated bound.
\end{proof}

\textit{Remark on the Concentration Bound.} The proof relies heavily on the delocalization assumption (Assumption~\ref{app:ass2}). While this is plausible for large neural networks, it is difficult to verify rigorously. However, empirical studies of eigenvectors in trained networks often show diffuse weight distributions, supporting this assumption. Additionally, the concentration bound requires $k = \Omega(r \log r)$, which is consistent with our empirically observed sparsity threshold.

\subsection{Synthesis: Why Random Masks Work}
\label{app:synthesis}

Propositions~\ref{prop:lowdim} and~\ref{prop:random} together explain the empirical success of random sparse fine-tuning in RLVR:

\begin{enumerate}
    \item \textbf{KL constraints create a low-dimensional trust region.} The per-step KL bound restricts updates to a region defined by the Fisher matrix's quadratic form.
    
    \item \textbf{The Fisher matrix has low effective rank.} Due to Assumption~\ref{app:ass1}, the policy-relevant subspace is only $r$-dimensional, where $r \ll d$.
    
    \item \textbf{Delocalization enables random sampling.} Due to Assumption~\ref{app:ass2}, a random mask of size $k > r$ captures this subspace with high probability.
    
    \item \textbf{Multiple masks succeed.} The combinatorial number of ways to choose $k$ parameters from $d$---each capable of spanning the same $r$-dimensional subspace---directly yields the Multiple Ticket Hypothesis.
\end{enumerate}

\subsection{Connection to Neural Tangent Kernel Theory}
\label{app:ntk}

Our theoretical framework connects to Neural Tangent Kernel (NTK) theory. In the infinite-width limit, neural networks operate in a ``lazy training'' regime where the kernel remains approximately constant. While our setting involves finite-width networks with potentially evolving representations, the low effective rank of the Fisher matrix suggests a similar phenomenon: the policy-relevant directions are determined early and remain stable, allowing arbitrary parameter subsets to navigate this low-dimensional landscape.

The delocalization assumption (Assumption~\ref{app:ass2}) is particularly natural in the NTK regime, where eigenvectors of the kernel matrix tend to be spread across many input dimensions rather than localized. This provides theoretical grounding for our empirical observation that random masks work across different model architectures and scales.

\end{document}